\renewcommand{\Re}{{\rm I}\!  {\rm R}}
\newtheorem{theorem}{Theorem}
\newtheorem{lemma}{Lemma}
\newtheorem{remark}{Remark}
\newtheorem{assumption}{Assumption}
\newcommand{\be}{\begin{equation}}
\newcommand{\ee}{\end{equation}}
\newcommand{\ba}{\begin{eqnarray}}
\newcommand{\ea}{\end{eqnarray}}
\newcommand{\bas}{\begin{eqnarray*}}
	\newcommand{\eas}{\end{eqnarray*}}
\def\S{{\cal S}}
\def\bfone{{\bf 1}}
\def\bfp{{\bf p}}
\def\bfy{{\bf y}}
\def\diag{{\rm diag}}
\def\int{{\rm int\,}}
\def\bfone{{\bf 1}}
\def\bfp{{\bf p}}
\def\bfy{{\bf y}}
\def\diag{{\rm diag}}
\def\int{{\rm int\,}}
\renewcommand{\Re}{{\rm I}\!  {\rm R}}
\title{ Ordinal Distance Metric Learning with MDS for Image Ranking }
\author{Panpan Yu \\ \textsl{School of Mathematics and Statistics}
	\\ \textsl{Beijing Institute of Technology}
	\\ \textsl{Beijing, 100081, China}
	\\ \textsl{2120151335@bit.edu.cn}\\
	\\ Qing-Na Li* 
	\\ \textsl{School of Mathematics and Statistics}
	\\ \textsl{Beijing Key Laboratory on MCAACI}
	\\ \textsl{Beijing Institute of Technology}
	\\ \textsl{Beijing, 100081, China}
	\\ \textsl{qnl@bit.edu.cn}
}
\date{}
\begin{document}
\maketitle
\footnote{*Corresponding author}	
\captionsetup[figure]{name={Fig.},labelsep=period}
\begin{abstract}
\noindent 
Image ranking is to rank images based on some known ranked images.
In this paper, we propose an improved linear ordinal distance metric learning approach based on the linear distance metric learning model in \citet{Li2015Ordinal}. By decomposing the distance metric $A$ as $L^TL$, the problem can be cast as looking for a linear map between two sets of points in different spaces, meanwhile maintaining some data structures. The ordinal relation of the labels can be maintained via classical multidimensional scaling, a popular tool for dimension reduction in statistics. A least squares fitting term is then introduced to the cost function, which can also maintain the local data structure. The resulting model is an unconstrained problem, and can better fit the data structure. Extensive numerical results demonstrate the improvement of the new approach over the linear distance metric learning model both in speed and ranking performance.

\vspace{1em}
\noindent\textsl{Keywords}: Image ranking; distance metric learning; classical multidimensional scaling; optimization model.
\end{abstract}

\section{Introduction}
Given a labeled image dataset (referred as the training set), image ranking is to find the most relevant images for a query image based on the training set. Different from binary classification and multi-classification, the labels of {the} training set in image ranking often have an order, for example, age. The two important and challenging aims for image ranking are as follows.
The first aim is to find which class the query image belongs to, and the second is to find the most relevant images in the specific class. The first aim actually falls into ordinal regression in statistics, where different approaches have been proposed, see \citet{10.1109/TKDE.2015.2457911} for a survey on ordinal regression and \citet{Qiao2015Noncrossing}, \citet{PMID:28371790} for the recent development.
However, the second aim makes image ranking different from ordinal regression since the training images having the same label with query image need to be further ranked. Therefore, {a} direct extension of methods for ordinal regression is not appropriate for image ranking.

As for the second aim, to find the most relevant images, a natural way is to use Euclidean distance between images to measure their dissimilarities. However, as we will show later, in most cases, Euclidean distance is not appropriate for dissimilarity. A practical way is to learn a distance metric (denoted as $A$) to measure the distances between images.
This is referred as distance metric learning (DML). Then for a query image, the most relevant images are those with smallest distances under metric $A$. Many DML methods have been developed for image classification and clustering tasks. For example, the SDP approach proposed by \citet{Xing2003Distance}, an online learning algorithm proposed by \citet{Shalev2004Online}, a neighborhood component analysis (NCA) by \citet{NIPS2004_2566}, and so on (\citet{BarHillel2003Learning,Shen2010Scalable,Yang2012Bayesian}). However, most of these methods didn't assume the labels are ordered. Therefore, they can not be directly used for image ranking.

Recently, \citet{Li2015Ordinal} firstly introduced ordinal DML for image ranking. By a carefully designed weighting factor based on ordinal labels, the ordinal relationship of the images is expected to be maintained. An alternating iterative update was proposed to solve the resulting nonlinear convex semidefinite programming model, which is basically a projected gradient algorithm.

On the other hand, multidimensional scaling (MDS) is an important method for dimension reduction, which has been widely used in signal processing, molecular conformation, psychometrics and social measurement. We refer to some monographs and surveys for more applications (\citet{Anjos,Borg, Dattorro, Dokmanic, Liberti}). The idea of classical MDS {(cMDS)} is to embed the given objects into a low dimensional space based on a Euclidean distance matrix. Recently, there has been great progress in MDS, such as the semismooth Newton method for nearest Euclidean distance matrix problem (\citet{Qi2013A, Qi2014Computing}), the inexact smoothing Newton method for nonmetric MDS (\citet{LiQi2017}), as well as the applications of MDS in nonlinear dimension reduction (\citet{QiDing2016,Ding}), binary code learning (\citet{Dai2016Design}), and sensor network localization (\citet{QiXiuYuan2013}).

{\bf Our Contributions}
Note that the distance metric $A$ in DML is positive semidefinite. We represent $A$ as $A = L^TL$, where $L $ is a rectangular matrix. The first contribution of our work is that we look for $L$ instead of $A$, which gets around of positive semidefinite constraint on $A$. As a result, our method does not need spectral decomposition in each iteration and thus has quite low computational complexity. Moreover, if $L$ has only a few rows, the obtained $A$ is low rank. This brings new insight on distance metric. Distances between images under $A$ are basically the Euclidean distance between new points in a new space. The second contribution is that we employ cMDS to get the ideal points in the new space, whose Euclidean distances keep the ordinal relations as the labels do. In other words, cMDS is a key step to achieve the goal of maintaining the ordinal relationship of the data. The third contribution is that we propose a new ordinal DML model, which concerns ordinal relations between images and maintains local data structure. Extensive experiments are conducted on two data sets: UMIST face dataset and FG-NET aging  dataset. The results demonstrate the efficiency and improvement of the new approach over the linear DML model in \citet{Li2015Ordinal} both in speed and ranking performance.

The organization of the paper is as follows. In Section \ref{sec-model}, we give some preliminaries about DML model in \citet{Li2015Ordinal} and cMDS. In Section \ref{sec-Algorithm}, we  propose our new approach, referred as cMDS-DML approach. In Section \ref{sec-alg-detail}, we discuss the numerical algorithm to solve the resulting unconstrained problem. In Section \ref{sec-numerical} we report the numerical results to demonstrate the efficiency of the proposed model. Final conclusions are given in Section \ref{sec-conclusion}.

Notations. We use $\S^n $ to denote the space of symmetric matrices of $n\times n$, and $\S^n_+ $ to denote the space of positive semidefinite matrices of $n\times n$, and $A\succeq 0$ means $A\in\S^n_+$. We use small bold letters to indicate vectors.
\section{Preliminaries}\label{sec-model}
In this section, we give a brief review on the linear DML model in \citet{Li2015Ordinal} and then give some preliminaries on cMDS.
\subsection{Problem Statement}
Suppose $\mathcal{X}=\{(\mathbf{x}_i,r_i):\ i=1,\cdots,n\}$ is the training set, where $\mathbf x_i\in\Re^d$, $i=1,\cdots,n$, are the observed data, and $r_i\in \Re$, $i=1,\cdots,n$, are the corresponding labels which have an order. $n$ is sample number of the training set. We need the following assumptions.
\begin{assumption}\label{assumption2.1}
Suppose there are total $m$ different ordinal labels. Assume that the data in the training set are grouped as follows
\begin{eqnarray*}
	\mathbf{x}_1, \cdots, \mathbf{x}_{i_1}, &\rm{\ with\ labels\ }& r_1 = \cdots =r_{i_1} := a_1,\\
	\mathbf{x}_{i_1+1}, \cdots, \mathbf{x}_{i_2}, &\rm{ with\ labels\ }& r_{i_1+1} = \cdots =r_{i_2} := a_2,\\
	\cdots&\cdots&\cdots\\
	\mathbf{x}_{i_{m-1}+1}, \cdots, \mathbf{x}_{i_m}, &\rm{\ with\ labels\ }& r_{i_{m-1}+1} = \cdots =r_{i_m} := a_m,
\end{eqnarray*}
where $i_m = n$, and $a_1, \cdots, a_m$ are distinct ordinal labels.
\end{assumption}
\begin{assumption}\label{assumption}
Suppose  $\mathbf{x}_1,\cdots, \mathbf{x}_n$ are zero-centralized, i.e., $\sum_{i = 1}^n\mathbf{x}_i=0$.
\end{assumption}

To rank images, the distance metric learning approach uses the distance $d_A(\cdot,\cdot)$ defined by
\[
d_A(\mathbf{x}_i,\mathbf{x}_j) = \|\mathbf{x}_i-\mathbf{x}_j\|_A = \sqrt{(\mathbf{x}_i-\mathbf{x}_j)^TA(\mathbf{x}_i-\mathbf{x}_j)},
\]
where $A\in\S^{d}$ is positive semidefinite.
The goal is then to learn an appropriate $A$, such that the distances under metric $A$ between relevant images are small. Once $A$ is obtained, the most relevant images of a query image can be provided as those with smallest distances under $A$.
To this end, one expects $A$ to have two properties.
Firstly, ordinal information needs to be preserved under $A$, that is, for $\mathbf{x}_i, \mathbf{x}_j$ with $r_i\neq r_j$,  $d_A(\mathbf{x}_i,\mathbf{x}_j)$ is small when $|r_i-r_j|$ is small.
Secondly, local geometry structure of the data needs to be maintained under $A$. That is, for $\mathbf{x}_i, \mathbf{x}_j$ with $r_i= r_j$,  $d_A(\mathbf{x}_i,\mathbf{x}_j)\approx d_{I_d}(\mathbf{x}_i,\mathbf{x}_j)$, where $I_d$ is the identity matrix of size $d$. See also \citet{Li2015Ordinal}.

\subsection{Linear Distance Metric Learning for Ranking}
As mentioned in {the} introduction, most DML approaches did not assume the labels are ordered. \citet{Li2015Ordinal} firstly proposed a method named Linear Distance Metric Learning for Ranking (LDMLR), which dealt with ordinal labels. Below we briefly review the main idea of LDMLR.

To derive LDMLR, for each $\mathbf{x}_i$, we first specify $K$ nearest data points (under Euclidean distance) with the same label as its target neighbors. The LDMLR method is to learn a metric $A$ by solving the following nonlinear convex semidefinite programming problem:
\be\label{prob-ODML}
\begin{array}{ll}
\hbox{min}_{A\in \S^d} & h(A)\\
\hbox{s.t.} & A\succeq 0,
\end{array}
\ee
where
\[
h(A) =-\sum_{i,j} \omega_{ij}d_A^2(\mathbf{x}_i,\mathbf{x}_j) + \mu\sum_{\eta_{ij} = 1} (d_A^2(\mathbf{x}_i,\mathbf{x}_j)-d_{I_d}^2(\mathbf{x}_i,\mathbf{x}_j))^2 .
\]
Here $\mu>0$ is a tradeoff parameter. $\eta_{ij} \in \{0,1\}$ indicates whether $\mathbf{x}_j$ is one of $\mathbf{x}_i$'s target neighbors, i.e.,
\begin{equation}\label{eta}
	\eta_{ij}=
\left\{
\begin{array}{ll}
1,& \hbox{ if } \mathbf{x}_j \hbox{ is the target neighbor of } \mathbf{x}_i;\\
0, & \text{otherwise}.
\end{array}
\right.
%\begin{cases}
%		1, & \hbox{ if } \mathbf{x}_j \hbox{ is the target neighbor of } \mathbf{x}_i;\\
%		0, & \text{otherwise}.
%	\end{cases}
\end{equation}
And $\omega_{ij}$ is a weighting factor defined as
\begin{equation}\label{omega}
\omega_{ij}= \left\{
\begin{array}{ll}
(|r_i-r_j|+1)^p & \hbox{ if } r_i \neq r_j;\\
0&\hbox{ otherwise,}
\end{array}
\quad \textrm{where $p> 0$.}
\right.
\end{equation}
The first term of $h(A)$ can be viewed as a penalty term of the distance between two data points if they have different labels. The weighting factor $\omega_{ij}$ is used to adjust the importance of such distances. As we can see from the definition of $\omega_{ij}$, the larger $|r_i-r_j|$ is, the bigger $\omega_{ij}$ is. If $\mathbf{x}_i$ and $\mathbf{x}_j$ have the same label, we don't want to maximize their distances, so $\omega_{ij}=0 $ in this case. The second term of $h(A)$ is trying to maintain the local structure between the images with the same label. Model (\ref{prob-ODML}) is a convex model, and can be solved by state-of-art quadratic semidefinite programming packages, such as QSDP by \cite{qsdp}. In \cite{Li2015Ordinal}, the projected gradient method is applied to solve (\ref{prob-ODML}), i.e., the following update is used
\[
A_{k+1} = \Pi_{\S^d_{+}}(A_{k}-\nabla h(A_k)),
\]
where $\Pi_{\S^d_{+}}(\cdot)$ denotes the projection onto $\S^d_+$.

In LDMLR, the ordinal relation of the images is maintained by introducing a weighting factor, which is calculated based on the ordinal labels. Furthermore, the local data structure can be kept by the second term in $h(A)$.

\subsection{Classical Multidimensional Scaling (cMDS)}
The aim of cMDS is to embed data in a lower dimensional space while preserving the distances between data.
Given the coordinates of a set of points, namely $\{\mathbf{y}_1, \ldots, \mathbf{y}_n\}$ with
$\mathbf{y}_i \in \Re^s$,
it is straightforward to compute the pairwise Euclidean distances: $d_{ij} = \|\mathbf{y}_i - \mathbf{y}_j \|$, $i,j=1,\ldots, n$.
The matrix $D = (d_{ij}^2)$ is known as the (squared) Euclidean Distance Matrix (EDM) of those points.
However, the inverse problem is more interesting and important.
Suppose $D$ is given. The method of cMDS generates a set of coordinates that preserve the pairwise distances in $D$. We give a short description of cMDS below.
Let
\be\label{B-def}
J := I - \frac 1n \bfone \bfone^T \qquad \mbox{and} \qquad
B(D) : = -\frac 12 JDJ,
\ee
where $I$ is the $n\times n$ identity matrix and $\bfone$ is the (column) vector of all ones in $\Re^n$.
In literature, $J$ is known as the centralization matrix and $B$ is the double-centralized matrix of $D$ (also
the Gram matrix of $D$ because $B$ is positive semidefinite). Suppose $B$ admits the spectral decomposition:
\be \label{cMDS-a}
B(D) = [\bfp_1, \ldots, \bfp_s] \left[ \begin{array}{ccc}
	\lambda_1 & & \\
	& \ddots & \\
	&        & \lambda_s
\end{array} \right] \left[ \begin{array}{c}
\bfp_1^T \\ \vdots \\ \bfp_s^T
\end{array} \right],
\ee
where $\lambda_1, \ldots, \lambda_s$ are positive eigenvalues of $B$ (the rest are zero) and
$\bfp_1, \ldots, \bfp_s$ are the corresponding orthonormal eigenvectors. Then the following
coordinates $\bfy_1, \ldots, \bfy_n$ obtained by
\be \label{cMDS-b}
[ \bfy_1, \bfy_2, \ldots, \bfy_n] := \left[ \begin{array}{ccc}
	\sqrt{\lambda_1} & & \\
	& \ddots & \\
	&        & \sqrt{\lambda_s}
\end{array} \right] \left[ \begin{array}{c}
\bfp_1^T \\ \vdots \\ \bfp_s^T
\end{array} \right]
\ee
preserve the known distances in the sense that $\| \bfy_i - \bfy_j \| = d_{ij}$ for all $i, j=1,\ldots, n$.
This is the well known cMDS. We refer to \citet{GOWER198581}, \citet{Schoenberg1935Remarks}, \citet{Torgerson1952}, \citet{Young1938Discussion}, \citet{Borg}, and \citet{Dattorro} for detailed description and generalizations of cMDS.

\section{A New Approach for Ranking}\label{sec-Algorithm}

In this section, we will motivate our new approach and discuss some related properties of EDM.

\subsection{A New Approach}
The idea of our approach is as follows. First, by decomposing $A=L^TL$, the problem reduces to looking for a linear map $L$ from the original space $\Re^d$ to a new space, denoted as $\Re^s$. The points $L\mathbf x_i$ in the new space are referred as the embedding points corresponding to $\mathbf x_i$, $i = 1, \cdots, n$. Then we apply cMDS to get the estimations of those embedding points, denoted as $\{\mathbf y_1, \cdots, \mathbf y_n\}$. Finally, $L$ is learned based on two sets of points $\{\mathbf x_1, \cdots, \mathbf x_n\}$ and $\{\mathbf y_1, \cdots, \mathbf y_n\}$. We detail our approach in the following three steps.

{\bf Step 1. Decompose $A=L^TL$}

A natural way of learning a distance metric $A\in\S^d$ is to decompose $A$ as $A = L^TL$, where $L\in\Re^{ s\times d}$ is a rectangular matrix and $s$ is a prescribed dimension, where $s\le d$. The decomposition has been used in several references, see for example \citet{Sugiyama2007}, \citet{Weinberger2009Distance}, \citet{Xiang2008}. Learning $L$ instead of $A$ brings us some advantages.
Firstly, it allows us to get around of the positive semidefinite constraint $A\succeq 0$, resulting in an unconstrained model.
Secondly, low rank structure of $A$ can be specified by choosing $s\ll d$. Note that given a query image, it is necessary to compute distances between {the} query image and every training image. The time complexity of computing distances should be kept as low as possible. With a low rank $A$, such complexity can be reduced from $O(d^2)$ to $O(ds)$. Finally, it provides us insights on the Mahalanobis distance metric A. $L\in\Re^{s\times d}$ is basically a linear map from $\Re^d $ to $\Re^s$. The distance between $\mathbf{x}_i$ and $\mathbf{x}_j$ under metric $A$
can be reformulated as
\begin{equation}\label{distance}
	\begin{split}
		d_A(\mathbf{x}_i,\mathbf{x}_j)
		 =\sqrt{(\mathbf{x}_i-\mathbf{x}_j)^TL^TL(\mathbf{x}_i-\mathbf{x}_j)}
		 =\| L(\mathbf{x}_i-\mathbf{x}_j)\|
		:{=}d^L(\mathbf{x}_i,\mathbf{x}_j).
	\end{split}
\end{equation}
In other words, the distance between $\mathbf{x}_i$ and $\mathbf{x}_j$ under metric $A$ is essentially the Euclidean distance of new points $ L\mathbf x_i$ and $ L \mathbf x_j$ in the space $\Re^s$.

Recall that we denote the space where $\mathbf x_i$ lies in (i.e., $\Re^d$) as the original space, the space where $L\mathbf x_i$ lies in (i.e., $\Re^s$) as the new space, and $L\mathbf x_i$ is referred as the embedding point of $\mathbf x_i$.  Now image ranking reduces to looking for a linear map, which maps $\mathbf x_i$ to a proper new space such that the following properties hold.
\begin{itemize}
\item [(i)] The distances between embedding points can well reflect the corresponding ordinal labels. In other words, the Euclidean distances between embedding points with different labels should follow the order of their label differences, i.e.,
\[
 \|L\mathbf x_i-L\mathbf x_j\|> \|L\mathbf x_i-L\mathbf x_k\|, \hbox{ if } |r_i-r_j|>|r_i-r_k|, \ r_i\neq r_j, \ r_i\neq r_k, \ r_j\neq r_k.
\]
 \item [(ii)] Local data structure must be maintained. That is, the Euclidean distances between a point and its target neighbors with the same label in the original space  need to be maintained as much as possible in the new space. That is,
\[d_A(\mathbf x_i,\mathbf x_j)\approx d_{I_d}(\mathbf{x}_i,\mathbf{x}_j),\hbox{ if } r_i= r_j{\hbox{ and }\mathbf{x}_j \hbox{ is the target neighbor of } \mathbf{x}_i}.\]

\end{itemize}
In the following, we apply cMDS to get the estimations $\{\mathbf y_1, \cdots, \mathbf y_n\}$ of the embedding points in a new space, which {enjoy} property (i), then learn a linear mapping $L$ based on two sets of points $\{\mathbf x_1, \cdots, \mathbf x_n\}$ and $\{\mathbf y_1, \cdots, \mathbf y_n\}$.

{\bf Step 2. Apply cMDS}

In order to apply cMDS to get the estimations of embedding points, an EDM is needed. Note that the points with the same label can be basically viewed as one point, and further inspired by the weighting factor defined in (\ref{omega}), we can construct an EDM based on the ordinal labels. A trivial choice is to define $D$ by $D_{ij} = (|r_i-r_j|^2), \ i,j= 1, \cdots, n.$ However, from numerical point of view, we can further add a parameter $\beta$ to $|r_i-r_j|$ to allow more flexibility. This leads to the following form of $D$.
 Define $D\in \S^n$ as
\be\label{D}
	D_{ij}=
	\left\{
	\begin{array}{ll}
		(|r_i-r_j|+\beta)^2,\quad\quad& \text{if }  r_i\ne r_j;
		\\
		0, & \hbox{otherwise.}
	\end{array}
	\right.
\ee
Under Assumption \ref{assumption2.1}, let
\be\label{bardelta}
	\bar \delta_{ij}=
	\left\{
	\begin{array}{ll}
		 |a_i-a_j| ,\quad\quad& \text{if }  i\ne  j, \ i, j = 1, \cdots,m;
		\\
		0, & \hbox{otherwise.}
	\end{array}
	\right.
\ee
The following theorem shows  that if $\beta$ is properly chosen, then $D$ is an EDM.
\begin{theorem}\label{thm}
Let $\overline\Delta^{\frac12}:=(\bar \delta_{ij})$ and $\mu_0$ is the smallest eigenvalue of $-\frac12 J\overline \Delta^{\frac12}J$. If $\beta\ge-4\mu_0$, then $D$ defined by (\ref{D}) is an EDM.
		
\end{theorem}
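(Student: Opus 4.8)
The plan is to verify the classical Schoenberg criterion: a symmetric matrix with zero diagonal is an EDM if and only if $\bfv\trt D\bfv\le 0$ for every $\bfv$ with $\bfone\trt\bfv=0$ (equivalently $-\frac12 JDJ\succeq 0$). Since the matrix $D$ in (\ref{D}) is symmetric with zero diagonal by construction, only this conditional negative semidefiniteness needs to be established.

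First I would collapse the $n\times n$ problem to an $m\times m$ one using the grouping of Assumption \ref{assumption2.1}. Because $D_{ij}$ depends on $i,j$ only through their group labels, $D = S\widetilde D S\trt$, where $S\in\Re^{n\times m}$ is the $0/1$ group-membership matrix and $\widetilde D\in\S^m$ is the reduced matrix with $\widetilde D_{kl}=(|a_k-a_l|+\beta)^2$ for $k\ne l$ and $\widetilde D_{kk}=0$. In fact, if $\widetilde D$ is the EDM of some points $z_1,\dots,z_m$, then $D$ is literally the EDM of the replicated configuration (point $i$ placed at $z_{g(i)}$, where $g(i)$ is the group of $i$), so $D$ is an EDM as soon as $\widetilde D$ is. Thus it suffices to prove that $\widetilde D$ is an EDM, i.e. $\bfw\trt\widetilde D\bfw\le0$ for all $\bfw\in\Re^m$ with $\bfone\trt\bfw=0$; here the relevant centering matrix is $J=I_m-\frac1m\bfone\bfone\trt$, matching the $-\frac12 J\overline\Delta^{\frac12}J$ appearing in the statement.

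Next I would expand the square. Writing $\overline\Delta=(|a_k-a_l|^2)$ (so that $\overline\Delta^{\frac12}=(\bar\delta_{kl})$ is the entrywise root in the theorem), one has $\widetilde D=\overline\Delta+2\beta\,\overline\Delta^{\frac12}+\beta^2(\bfone\bfone\trt-I)$. Evaluating on a centered $\bfw$ kills the $\bfone\bfone\trt$ piece and gives $\bfw\trt\widetilde D\bfw=\bfw\trt\overline\Delta\,\bfw+2\beta\,\bfw\trt\overline\Delta^{\frac12}\bfw-\beta^2\|\bfw\|^2$. The first term is harmless: $\overline\Delta$ is the squared-distance matrix of the scalars $a_1,\dots,a_m$ on the line, hence an EDM, and a direct computation yields $\bfw\trt\overline\Delta\,\bfw=-2\big(\sum_k w_k a_k\big)^2\le0$.

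The crux is the indefinite middle term, since $\overline\Delta^{\frac12}$ (distances, not squared distances) need not be an EDM. This is where $\mu_0$ enters: with $M:=-\frac12 J\overline\Delta^{\frac12}J$ and $\bfw$ centered one has $\bfw\trt\overline\Delta^{\frac12}\bfw=-2\,\bfw\trt M\bfw$, and the Rayleigh bound $\bfw\trt M\bfw\ge\mu_0\|\bfw\|^2$ gives $\bfw\trt\overline\Delta^{\frac12}\bfw\le-2\mu_0\|\bfw\|^2$. (Note $M\bfone=0$, so $\mu_0\le0$ and $-4\mu_0\ge0$, which makes the hypothesis a genuine lower bound on $\beta$.) Substituting, $2\beta\,\bfw\trt\overline\Delta^{\frac12}\bfw-\beta^2\|\bfw\|^2\le-\beta(\beta+4\mu_0)\|\bfw\|^2$, which is $\le0$ exactly when $\beta\ge-4\mu_0$. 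Combined with $\bfw\trt\overline\Delta\,\bfw\le0$, this yields $\bfw\trt\widetilde D\bfw\le0$ on the centered subspace, so $\widetilde D$ — and therefore $D$ — is an EDM. I expect the only real difficulty to be this control of the linear-in-distance term: recognizing that its obstruction to being an EDM is measured precisely by the smallest eigenvalue $\mu_0$ of $M$, and that the additive shift $\beta$ dominates it once $\beta\ge-4\mu_0$.
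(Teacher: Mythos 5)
Your proposal is correct and takes essentially the same route as the paper: the paper also first collapses $D$ to the $m\times m$ matrix $\Delta=(\,(\bar\delta_{kl}+\beta)^2)$ via the replication argument (its Lemma 1), then expands $B(\Delta)=B(\overline\Delta)+2\beta B(\overline\Delta^{\frac12})+\frac12\beta^2 J$, uses that $\overline\Delta$ is an EDM, and controls the middle term by the smallest eigenvalue $\mu_0$ of $-\frac12 J\overline\Delta^{\frac12}J$, exactly as you do. The only cosmetic difference is that you check conditional negative semidefiniteness of the reduced matrix on the centered subspace, while the paper checks positive semidefiniteness of the doubly centered matrix $B(\Delta)$; these are equivalent forms of Schoenberg's criterion.
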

The proof is postponed in Section \ref{sec-proof}.	
If $D$ is not an EDM, we refer to \citet{Qi2013A}, \citet{LiQi2017} for more details. By applying cMDS to $D$, we can get the estimations $\{\mathbf{y}_1, \cdots, \mathbf{y}_n\}$ of embedding points in the new space.

\begin{remark}
For $\mathbf x_i$ and $\mathbf x_j$ with $r_i=r_j$, their estimations of embedding points $\mathbf y_i, \mathbf y_j$ basically collapse to one point, since $\|\mathbf y_i-\mathbf y_j\|=D_{ij} = 0$. For $\mathbf x_i$ and $\mathbf x_j$ with $r_i\neq r_j$, the Euclidean distance between their estimations $\mathbf y_i,\ \mathbf y_j$ of embedding points is $ \|\mathbf y_i-\mathbf y_j\|=D^{\frac12}_{ij} = |r_i-r_j|+\beta$. Consequently, there is
\[
 \|\mathbf y_i-\mathbf y_j\|> \|\mathbf y_i-\mathbf y_k\|, \hbox{ if } |r_i-r_j|>|r_i-r_k|, \ r_i\neq r_j, \ r_i\neq r_k, \ r_j\neq r_k.
 \]
In other words, { $\{ \mathbf y_1, \cdots, \mathbf y_n \}$} enjoy property $(i)$.
\end{remark}

{\bf Step 3. Matching Two Sets of Points}

The final step is to learn $L$ based on two sets of points $\{\mathbf x_1, \cdots, \mathbf x_n\}$ and $\{\mathbf y_1, \cdots, \mathbf y_n\}$ to make $L$ have properties (i) and (ii). To deal with property (i), we need to match $\{\mathbf x_1, \cdots, \mathbf x_n\}$ and $\{\mathbf y_1, \cdots, \mathbf y_n\}$ as much as possible since $ \mathbf y_1, \cdots, \mathbf y_n $ already satisfy property (i). A natural statistical way is to use a least squares fitting term. To tackle property (ii), we adopt the second term of $h(A)$ in (\ref{prob-ODML}), since it does a good job based on the numerical performance. Now we reach the following model
\be\label{prob}
\hbox{min}_{L\in\Re^{s\times d},c\in \Re} \ f(L,c):=\frac{1}{2}\sum_{i=1}^n\| L\mathbf{x}_i-c\mathbf{y}_i\|^2+ \mu \sum_{\eta_{ij}=1}((d^{L } (\mathbf{x}_i,\mathbf{x}_j))^2-d_{I_d}^2(\mathbf{x}_i,\mathbf{x}_j))^2,
\ee
where $\eta_{ij}$ is defined as in (\ref{eta}). To allow more flexibility, we also use a scaling variable $c\in \Re$ in the fitting term.

Although (\ref{prob}) is a nonconvex model in $L$, the proposed approach enjoys the following good properties.
\begin{itemize}
\item By dealing with $L$ instead, the resulting model (\ref{prob}) is an unconstrained problem, which allows various numerical algorithms to solve. Further, we can emphasize the low rank structure of $A$ by restricting $L$ to be a short fat matrix, i.e., $s\ll d$.
\item By applying cMDS, we take into account of the ordinal information of labels, which leads us a good estimation of embedding points.
\item By matching $\{\mathbf x_1,\cdots, \mathbf x_n\}$ with $\{\mathbf y_1, \cdots,\mathbf  y_n\}$ with the least squares fitting term, hopefully, the resulting embedding points will also keep property (i). Our numerical results {actually} verify this observation.
\end{itemize}
\subsection{Proof of Theorem \ref{thm}}\label{sec-proof}

Define $\Delta \in\S^m$ as %	$\Delta = (\delta_{ij}^2)$ where
\begin{equation}\label{delta}
\Delta = (\delta_{ij}^2), \ \hbox{where}\ \ \delta_{ij}=
\left\{
\begin{array}{ll}
\bar\delta_{ij}+\beta ,\quad\quad& \text{if}\quad  i\ne  j, \ i, j = 1, \cdots,m;\\
0, & \hbox{otherwise.}
\end{array}
\right.
\end{equation}
Then we have the following lemma.
\begin{lemma}\label{lem1}
Let $D\in\S^n$ and $\Delta\in\S^m$ be defined as in (\ref{D}) and (\ref{delta}). Let Assumption \ref{assumption} hold.
$D$ is an EDM if and only if $\Delta$ is an EDM.
\end{lemma}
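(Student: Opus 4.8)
The plan is to exploit the block structure of $D$ induced by the grouping in Assumption \ref{assumption2.1} and to reduce the statement to the intrinsic geometric characterization of a Euclidean distance matrix: an $N\times N$ matrix with zero diagonal is an EDM if and only if its entries are realized as squared pairwise distances of some points in a Euclidean space. Concretely, let $G\in\Re^{n\times m}$ be the $0/1$ group-membership matrix, with $G_{ik}=1$ exactly when $\mathbf{x}_i$ carries label $a_k$. Since $D_{ij}$ depends only on the labels of $i$ and $j$ and vanishes whenever $r_i=r_j$, one checks directly that $D=G\Delta G^{T}$; that is, $D$ is obtained from $\Delta$ by replicating the $k$-th row and column $n_k$ times, where $n_k$ is the size of the $k$-th group. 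This identity is the structural backbone of both directions.

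For the ``if'' direction I would start from a realization of $\Delta$: if $\Delta$ is an EDM there exist points $\mathbf{w}_1,\dots,\mathbf{w}_m$ with $\|\mathbf{w}_k-\mathbf{w}_l\|^2=\Delta_{kl}$. Define $\mathbf{z}_i:=\mathbf{w}_{k}$ for every index $i$ in group $k$. Then for $i$ in group $k$ and $j$ in group $l$ we get $\|\mathbf{z}_i-\mathbf{z}_j\|^2=\|\mathbf{w}_k-\mathbf{w}_l\|^2=\Delta_{kl}=D_{ij}$, where the case $k=l$ reproduces $D_{ij}=0$. Hence $\{\mathbf{z}_i\}$ realizes $D$, so $D$ is an EDM. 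For the ``only if'' direction, suppose $D$ is an EDM and fix a realization $\mathbf{z}_1,\dots,\mathbf{z}_n$ with $\|\mathbf{z}_i-\mathbf{z}_j\|^2=D_{ij}$. Because $D_{ij}=0$ whenever $i,j$ lie in the same group, all points of a group must coincide; let $\mathbf{w}_k$ be the common value on group $k$. Evaluating the realization across two groups gives $\|\mathbf{w}_k-\mathbf{w}_l\|^2=\Delta_{kl}$, so $\{\mathbf{w}_k\}$ realizes $\Delta$ and $\Delta$ is an EDM.

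An alternative route stays within the double-centering machinery of (\ref{B-def}): writing $J_m:=I-\frac1m\mathbf{1}\mathbf{1}^{T}$, Schoenberg's characterization says $D$ (resp.\ $\Delta$) is an EDM iff it has zero diagonal and $-\tfrac12 JDJ\succeq0$ (resp.\ $-\tfrac12 J_m\Delta J_m\succeq0$). From $D=G\Delta G^{T}$ and $J^{T}=J$ one obtains $-\tfrac12 JDJ=-\tfrac12 (JG)\Delta(JG)^{T}$, so the equivalence follows once I show that $\mathbf{t}\mapsto \mathbf{u}:=G^{T}\mathbf{t}$ maps $\mathbf{1}^{\perp}\subset\Re^{n}$ onto $\mathbf{1}^{\perp}\subset\Re^{m}$ while preserving the quadratic form $\mathbf{t}^{T}D\mathbf{t}=\mathbf{u}^{T}\Delta\mathbf{u}$. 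That the image lands in $\mathbf{1}^{\perp}$ uses $G\mathbf{1}_m=\mathbf{1}_n$, which forces $\mathbf{1}_m^{T}G^{T}\mathbf{t}=\mathbf{1}_n^{T}\mathbf{t}=0$; surjectivity uses that $G$ has full column rank $m$.

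I expect no serious obstacle: the lemma is really a statement about the block/replication structure rather than about the data $\mathbf{x}_i$, and indeed neither argument exercises Assumption \ref{assumption} (zero-centralization) in an essential way, since $D$ and $\Delta$ are built from the labels alone. The one place demanding care is the ``only if'' collapse step (equivalently, the surjectivity onto $\mathbf{1}^{\perp}\subset\Re^{m}$ in the algebraic version): the vanishing of the within-group entries of $D$ is exactly what forces the $n$ realizing points to degenerate into $m$ distinct ones, and this is what makes the reduction from $n$ to $m$ points valid in both directions.
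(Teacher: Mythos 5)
Your primary argument (realizing $\Delta$ by points $\mathbf{w}_k$ and replicating them across groups for the ``if'' direction; using the vanishing within-group entries of $D$ to collapse the $n$ realizing points to $m$ points for the ``only if'' direction) is correct and is essentially the paper's own proof of Lemma~\ref{lem1}, including your accurate observation that the zero-centralization assumption plays no essential role. The alternative double-centering route via $D = G\Delta G^{T}$ is a valid bonus, though the surjectivity of $\mathbf{t}\mapsto G^{T}\mathbf{t}$ from $\mathbf{1}^{\perp}_n$ onto $\mathbf{1}^{\perp}_m$ needs slightly more than full column rank (e.g., exhibit the preimage $t_i = u_{k(i)}/n_{k(i)}$).
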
	
\begin{proof}[\bf{Proof.}]
Suppose $D$ is an EDM generated by points $\{\mathbf  y_1, \cdots, \mathbf  y_n\}$. By the definition of $D$, there is \[
	\|\mathbf  y_i-\mathbf  y_j\|=0,\quad \hbox{if } r_i = r_j,
\]
which implies that $\mathbf  y_{i_{t-1}+1} = \cdots= \mathbf  y_{i_t}$, $t = 1,\cdots, m$. Let $\mathbf  y_{i_{t-1}+1} = \cdots=\mathbf   y_{i_t}:= \mathbf  z_t$, $t = 1, \cdots,m$. Obviously, $\Delta$ is an EDM generated by points $\{\mathbf  z_1,\cdots,\mathbf  z_m\}$. Conversely, suppose that $\Delta$ is an EDM generated by points $\{\mathbf  z_1,\cdots, \mathbf  z_m\}$. Let $\mathbf  y_{i_{t-1}+1} = \cdots= \mathbf  y_{i_t} =\mathbf   z_t$, $t = 1, \cdots, m$. One can show that $D$ is an EDM generated by $\{\mathbf  y_1, \cdots, \mathbf  y_n\}$. The proof is finished.
\end{proof}		
Next, we show that $\Delta$ is an EDM if $\beta$ is properly chosen. %The proof is inspired by that in \citet{Cailliez1983}.
\begin{lemma}\label{lem2}
Let $\overline\Delta^{\frac12}:=(\bar \delta_{ij})$ and $\mu_0$ is the smallest eigenvalue of $-\frac12 J\overline \Delta^{\frac12}J$. If $\beta\ge-4\mu_0$, then $\Delta$ defined by (\ref{delta}) is an EDM.			
\end{lemma}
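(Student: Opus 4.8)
The plan is to invoke the classical characterization of EDMs underlying the cMDS construction of Section~\ref{sec-model}: a symmetric hollow matrix $\Delta\in\S^m$ is an EDM if and only if its double-centered matrix $B(\Delta)=-\frac12 J\Delta J$ is positive semidefinite, where now $J=I-\frac1m\bfone\bfone^T$ is the $m\times m$ centralization matrix. Since $\Delta$ in (\ref{delta}) is symmetric with zero diagonal by construction, it remains only to show $-\frac12 J\Delta J\succeq 0$ under the hypothesis $\beta\ge -4\mu_0$.

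First I would expand the entrywise square. Using $(\bar\delta_{ij}+\beta)^2=\bar\delta_{ij}^2+2\beta\bar\delta_{ij}+\beta^2$ on the off-diagonal entries gives the matrix identity
\be
\Delta = \overline\Delta + 2\beta\,\overline\Delta^{\frac12} + \beta^2(\bfone\bfone^T - I),
\ee
where $\overline\Delta:=(\bar\delta_{ij}^2)$. The key observation is that $\overline\Delta=(|a_i-a_j|^2)$ is the squared Euclidean distance matrix of the scalar points $a_1,\dots,a_m\in\Re$, hence is itself an EDM, so $B(\overline\Delta)=-\frac12 J\overline\Delta J\succeq 0$ with no further work. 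Applying $-\frac12 J(\cdot)J$ to the identity and using $J\bfone=0$ (so $J\bfone\bfone^T J=0$) together with idempotence $J^2=J$ (so $JIJ=J$), the cross terms collapse to
\be
B(\Delta) = B(\overline\Delta) + 2\beta M + \frac{\beta^2}{2}J,
\ee
where $M:=-\frac12 J\overline\Delta^{\frac12}J$ is exactly the matrix whose smallest eigenvalue is $\mu_0$.

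Next I would reduce positive semidefiniteness of $B(\Delta)$ to a scalar inequality. Every term above annihilates $\bfone$, so it suffices to verify $v^T B(\Delta)v\ge 0$ for unit vectors $v\perp\bfone$, on which $Jv=v$. For such $v$ one has $v^T B(\overline\Delta)v\ge 0$ (the free EDM term), $v^T M v\ge\mu_0$ by the Rayleigh quotient, and $v^T(\frac{\beta^2}{2}J)v=\frac{\beta^2}{2}$. Because $M\bfone=0$ forces $\mu_0\le 0$, the hypothesis gives $\beta\ge -4\mu_0\ge 0$, so I may multiply $v^T M v\ge\mu_0$ by $2\beta\ge 0$ and add the three bounds to obtain
\be
v^T B(\Delta)v \;\ge\; 2\beta\mu_0 + \frac{\beta^2}{2} \;=\; \frac{\beta}{2}\,(\beta + 4\mu_0)\;\ge\;0,
\ee
where the last inequality is precisely $\beta\ge -4\mu_0$. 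This establishes $B(\Delta)\succeq 0$ and hence that $\Delta$ is an EDM.

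The main obstacle is not any single computation but casting the decomposition in the right shape: recognizing that the label matrix $\overline\Delta$ contributes a harmless positive semidefinite term, that the only indefinite contribution is the linear-in-$\beta$ cross term $2\beta M$, and that it is compensated by the quadratic term $\frac{\beta^2}{2}J$ exactly when $\beta$ crosses the threshold $-4\mu_0$. The bookkeeping with the centralization matrix $J$---restricting to $\bfone^\perp$ so that $J$ acts as the identity, and checking $\mu_0\le 0$ so that the stated bound is consistent and $2\beta\ge 0$ preserves the eigenvalue inequality---is where care is needed. Once Lemma~\ref{lem2} is in place, Lemma~\ref{lem1} immediately upgrades the conclusion from $\Delta$ to $D$, completing Theorem~\ref{thm}.
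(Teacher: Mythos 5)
Your proof is correct and follows essentially the same route as the paper's: the Schoenberg characterization, the decomposition $B(\Delta)=B(\overline\Delta)+2\beta B(\overline\Delta^{\frac12})+\frac{\beta^2}{2}J$, positive semidefiniteness of $B(\overline\Delta)$ since $\overline\Delta$ is an EDM, and the eigenvalue bound on the cross term (your restriction to $v\perp\bfone$ is equivalent to the paper's substitution $\mathbf{x}\mapsto J\mathbf{x}$). Your one refinement is making explicit that $M\bfone=0$ forces $\mu_0\le 0$ and hence $\beta\ge 0$, which the paper uses implicitly when multiplying its eigenvalue inequality by $2\beta$.
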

\begin{proof}[\bf Proof.]
It is well known (\citet{Schoenberg1935Remarks, Young1938Discussion}) that $\Delta$ is an EDM if and only if
\[
\diag(\Delta) =0 \  \hbox{and} \ B(\Delta) =-\frac{1}{2}J\Delta J\succeq 0 .
\]
Also note that
\begin{equation}\label{propertyJ}
J\mathbf \mathbf  \mathbf{1} =0, \ B(\Delta)J = B(\Delta), \ JB(\Delta) = B(\Delta).
\end{equation}
To prove that $\Delta$ is an EDM, we only need to show the positive semidefiniteness of $B(\Delta)$. Let $\overline \Delta = (\bar \delta_{ij}^2)$. Note that
\begin{equation*}
	B(\Delta) = B(\overline \Delta) + 2\beta B(\overline\Delta^{\frac12}) + \frac12\beta^2 J.
\end{equation*}
It suffices to show if $\beta\ge-4\mu_0$, then for any $x\in\Re^m$, there is
\begin{equation*}
	\mathbf  x^TB(\overline \Delta)\mathbf  x + 2\beta\mathbf   x^TB(\overline\Delta^{\frac12})\mathbf  x + \frac12\beta^2\mathbf  x^TJ\mathbf x\ge0.
\end{equation*}
	
Obviously, $\overline\Delta$ is an EDM. Consequently, for any $\mathbf  x\in\Re^m$, $\mathbf x^TB(\overline\Delta)\mathbf x\ge0$. Further, $B(\overline\Delta^{\frac12})-\mu_0I\succeq0$ implies that
\[
	\mathbf x^T(B(\overline\Delta^{\frac12})-\mu_0I)\mathbf x\ge0, \ \forall\ \mathbf x\in\Re^m.
\]
By substituting $\mathbf{x}$ by $J\mathbf{x}$ and noting equalities in \eqref{propertyJ}, we have
\[
	\mathbf x^TB(\overline \Delta^{\frac12})\mathbf x-\mu_0 \mathbf x^TJ\mathbf x\ge0, \ \forall\ \mathbf x\in\Re^m.
\]
It gives that
\[
	\mathbf x^TB(\overline \Delta)\mathbf x + 2\beta\mathbf  x^TB(\overline\Delta^{\frac12})\mathbf x + \frac12\beta^2\mathbf x^TJ\mathbf x\ge \beta(2\mu_0+\frac\beta2)\mathbf x^TJ\mathbf x\ge0,
\]
where the last inequality follows by the assumption $\beta\ge-4\mu_0$ as well as the positive semidefiniteness of $J$. The proof is finished.
\end{proof}

The proof of Lemma \ref{lem2} is inspired by Theorem 1 in \citet{Cailliez1983}. The difference is that $B(\overline\Delta)$ is an EDM and $\beta$ is allowed to be negative in Lemma \ref{lem2}.

\begin{proof}[\bf Proof of Theorem \ref{thm}.] The result of Theorem \ref{thm} can be directly derived from Lemma \ref{lem1} and Lemma \ref{lem2}.
\end{proof}

\begin{remark}
Note that in cMDS, $\{\mathbf y_1,\cdots,\mathbf y_n\}$ obtained from $D$ is not unique due to the eigenvalue decomposition of $B(D)$. However, ${\mathbf y_1,\cdots,\mathbf y_n}$ are centralized, i.e., $\sum_{i = 1}^n\mathbf y_i=0$. The computational cost for generating $\{\mathbf y_1,\cdots,\mathbf y_n\}$ is $O(n^3)$. If $n$ is large, the computational cost can be further reduced to $O(m^3)$ by the following process, which is based on Lemma \ref{lem1} and Lemma \ref{lem2}. It is easy to verify that ${\mathbf y_1,\cdots,\mathbf y_n}$ generated by Algorithm \ref{alg-y} satisfy  $\sum_{i = 1}^n\mathbf y_i=0$, and the corresponding EDM is $D$ defined in (\ref{D}).
\end{remark}

\begin{algorithm}[htb]
	\caption{ Alternative way to generate $\{\bf y_1,\cdots,\bf y_n\}$ }
	\label{alg-y}
	\begin{enumerate}[\quad Step 1.]
		\item Compute $\Delta$ defined by (\ref{delta}).
		\item Apply cMDS to $\Delta$ to get $\mathbf{z}_1, \cdots, \mathbf{z}_m\in\Re^{s_1}$.
		\item Let $\tilde{\mathbf  y}_{i_{t-1}+1} = \cdots= \tilde{\mathbf  y}_{i_t} =(\mathbf   z_t,\mathbf 0)\in\Re^s$, $t = 1, \cdots, m$, where $\mathbf 0\in\Re^{s-s_1}$.
		\item Denote $\overline{\mathbf y}=\sum_{i = 1}^n\tilde{\mathbf y_i}$. Let $\mathbf y_i = \tilde{\mathbf y}_i -\overline{\mathbf y} $, $i = 1,\cdots, n$.
	\end{enumerate}
\end{algorithm}
\section{Numerical Algorithm}\label{sec-alg-detail}
Problem (\ref{prob}) is an unconstrained nonlinear problem, and can be solved by various algorithms. Here, we choose the traditional steepest descent method with the Armijo line search. The convergence result of the steepest descent method can be found in classical optimization books, e.g. \citet[P42]{Nocedal}. Algorithm \ref{alg} summarizes the details of our approach.

\begin{algorithm}[htb]
	\caption{ cMDS-DML for image ranking }
	\label{alg}
	\begin{itemize}
		\item [S0] Given {a} training set: $\mathbf{x}_1,\cdots,\mathbf{x}_n\in \Re^d$, and their corresponding labels $r_1, \cdots, r_n$.\\
		Initialize:	$L^0 =(\mathbf{e}_1,\ldots,\mathbf{e}_s)^T\in\Re^{s\times d}$, $c_0=1$.
		
		Parameters:$\quad$$\mu$, $\epsilon>0$,
		$\sigma\in (0,1)$, $\rho\in (0,1)$, $\gamma>0$,
		$k = 0$.
		
		\item [S1] %Obtain a Euclidean distance matrix $D$ defined in (\ref{D}) based on the  training set.
		Compute the Euclidean distance matrix $D$ according to \eqref{D}.
		\item [S2] Apply cMDS to get {estimations} of embedding points $\mathbf{y}_1, \cdots, \mathbf{y}_n\in\Re^s$.
		\item [S3] Search $K$ target neighbors in the original space $\Re^d$ for each training sample $\mathbf{x}_1$, $\dots$, $\mathbf{x}_n$.
		\item [S4] Compute $\nabla f(L^k,c_k)$.  If $\|\nabla f(L^k,c_k)\|\le\epsilon$, stop; otherwise, let $d^k = -\nabla f(L^k,c_k)$, go to S5.
		\item [S5] Apply the Armijo line search to determine a steplength $\alpha_k=\gamma\rho^{m_k}$, where $m_k$ is the smallest positive integer such that the following inequality holds
		\[
		f((L^{k},c_k)+\gamma\rho^m d^k)-f(L^k,c_k)\le \sigma\gamma\rho^m \nabla f(L^k,c_k)^Td^k.
		\]
		\item [S6] Let $(L^{k+1},c_{k+1}) = (L^k,c_k)+\alpha_k d^k$, $k=k+1$, go to S4.	
	\end{itemize}
\end{algorithm}

{\bf Implementations}
Let $X_{ij}:=(\mathbf{x}_i-\mathbf{x}_j)(\mathbf{x}_i-\mathbf{x}_j)^T$,
the gradient $\nabla f(L,c) $ takes the following form
\begin{equation*}
\begin{split}
\nabla_L f(L,c)&=\sum_{i=1}^n(L\mathbf{x}_i\mathbf{x}_i^T-c\mathbf{y}_i\mathbf{x}_i^T)
+ 4\mu\sum_{\eta_{ij}=1}L(X_{ij}L^TLX_{ij}-X_{ij}^2)\\
&=\sum_{i=1}^n(L\mathbf{x}_i\mathbf{x}_i^T-c\mathbf{y}_i\mathbf{x}_i^T)
+ 4\mu\sum_{\eta_{ij}=1}(\|L(\mathbf x_i-\mathbf x_j)\|^2-\| \mathbf x_i-\mathbf x_j \|^2)L(\mathbf{x}_i-\mathbf{x}_j)(\mathbf{x}_i-\mathbf{x}_j)^T,\\
\nabla_c f(L,c)&=c\sum_{i=1}^n\mathbf{y}_i^T \mathbf{y}_i-\sum_{i=1}^n\mathbf{y}_i^T L \mathbf{x}_i.
\end{split}
\end{equation*}

{\bf Computational Complexity}
We compare the computational complexity (mainly in multiplication and division) of Algorithm \ref{alg} with that of LDMLR, and the details are summarized in Table \ref{table00}, where steps with underline indicate the iterative steps.
Note that if $n$ is large, S2 can be replaced by Algorithm \ref{alg-y} and the computational complexity for S2 can be further reduced from $O(n^3) $ to $O(m^3)$.
For the iterative process S4-S6, the complexity for each iteration is $O(rnK d^2)$, where $r$ is the maximum number for the line search loop. In contrast, for LDMLR, the computational complexity in each iteration is $O(\hbox{max}(n^2d^2, nKd^3))$, which is higher than that of S3-S6 in Algorithm \ref{alg}, no matter $n>d$ or $n<d$.

\begin{table}[!ht]
	\centering
	\caption{Computational Complexity for Algorithm \ref{alg} and LDMLR.}
	\label{table00}
	\begin{tabular}{c l|r r}
		\hline
		\multicolumn{2}{c|}  {Algorithm \ref{alg}} &\multicolumn{2}{c}{LDMLR}\\\hline
		Step & Complexity & Complexity & Step\\
		\hline
		S0 &    $O(sd)$  &   $O(d^2)$ & Initialize \\
		\hline
		S1 &  $O( n^2)$ &\multirow{3}{*}{$O(dn^2+ Kn^2)$} &$K$ target\\%\hline
		S2&  $O(n^3 ) $ &&neighbor\\
		S3& $O(dn^2+ Kn^2)$&&  search\\\hline	
		\underline{S4} & $O(nsd+nk(d+sd+s^2))$ & $O(n^2d^2+nKd^3)$ & $\underline{\nabla h(A) }$
		\\\hline
		\underline{S5}& $O(r(nKsd+nKd^2))$&  \multirow{2}{*}{$O(d^3)$}&  \multirow{2}{*}{$\underline{\Pi_{S^n_+}(\cdot)}$} \\
		\underline{S6} & $O(ds)$ &&\\\hline	
	\end{tabular}
\end{table}
\section{Numerical Results} \label{sec-numerical}
In this section, we present some numerical results to verify the efficiency of the proposed model.
To evaluate the performance of the model, we employ the following popular procedure to assess the image ranking model.
For a given dataset, we divide it into the training set and the testing set. We first learn a distance metric based on the training set, then apply it to rank each image in the testing set. Denote by $\{\mathbf{m}_i\}_{i=1}^N$ the images in the testing set, here $N$ is the size of testing set. The estimated label $\hat{p}_i$ is obtained based on the distance in the new space. We employ the popular $k$-nearest neighbor regression to obtain $\hat{p}_i$, which is used in \citet{Li2015Ordinal}, \citet{Weinberger2009Distance}.
The mean absolute error
\(MAE=1/N\sum_{i=1}^{N}|\hat{p}_i-p_i|\)
is used as a measure to evaluate the performance. Here $p_1,\cdots,p_N$ are the true labels of test data $\mathbf{m}_1, \cdots, \mathbf{m}_N$.

We test the proposed method on the UMIST dataset (\citet{Graham1998}) and FG-NET dataset (\citet{Lanitis2008}). We also compare our method with the method LDMLR in \citet{Li2015Ordinal}. For each test problem, we repeat each experiment 50 times and report the average results. The algorithm is implemented in {Matlab} R2016a and is run on a computer with Intel Core 2 Duo CPU E7500 2.93GHz, RAM 2GB.

\subsection{Experiments on the UMIST image dataset}

The UMIST face dataset is a multiview dataset which consists of 575 images of 20 people, each covers a wide range of poses from profile to frontal views. Fig. $\ref{Figure3}$ shows some examples from the UMIST dataset.
\begin{figure}[!ht]\centering
	\includegraphics[width=1cm]{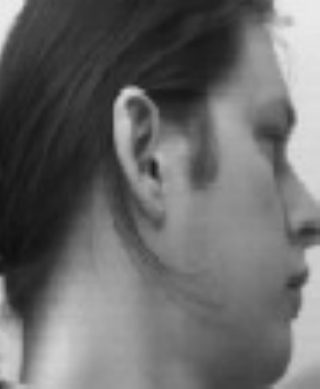}
	\includegraphics[width=1cm]{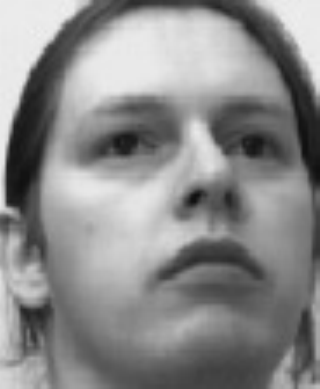}
	\includegraphics[width=1cm]{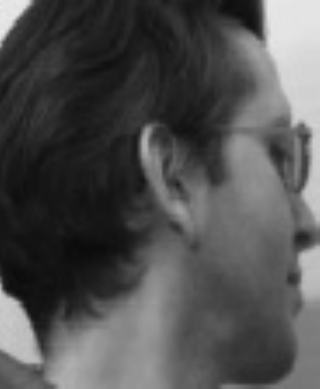}
	\includegraphics[width=1cm]{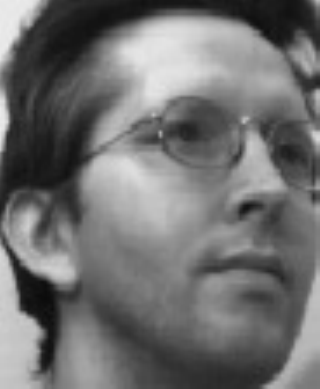}
	\includegraphics[width=1cm]{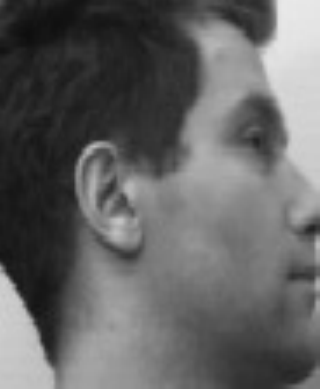}
	\includegraphics[width=1cm]{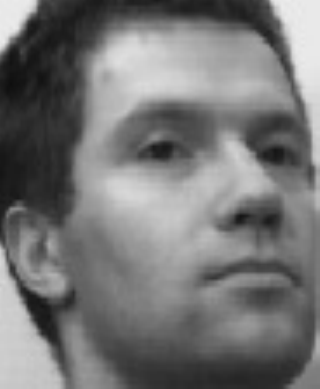}
	\includegraphics[width=1cm]{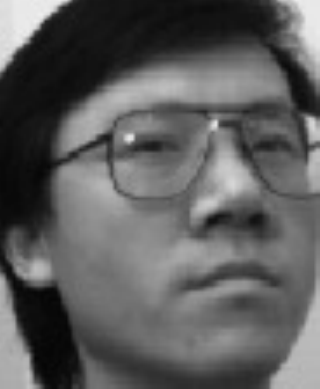}
	\includegraphics[width=1cm]{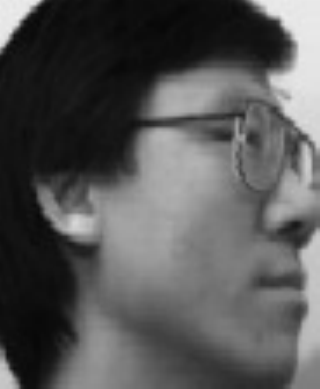}\\
	\includegraphics[width=1cm]{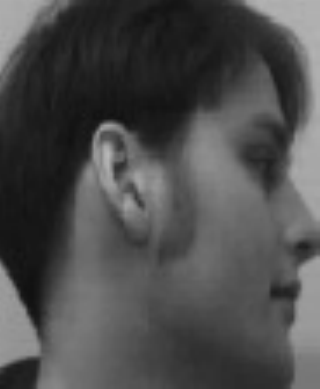}
	\includegraphics[width=1cm]{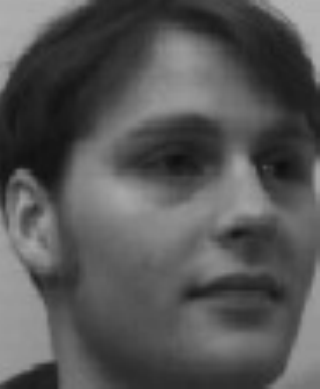}
	\includegraphics[width=1cm]{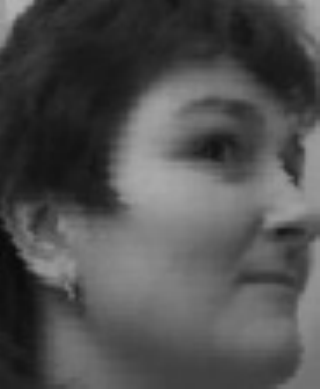}
	\includegraphics[width=1cm]{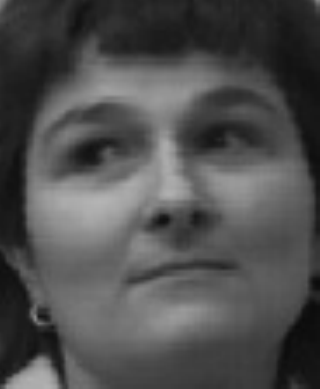}
	\includegraphics[width=1cm]{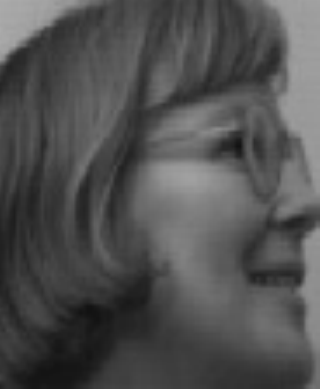}
	\includegraphics[width=1cm]{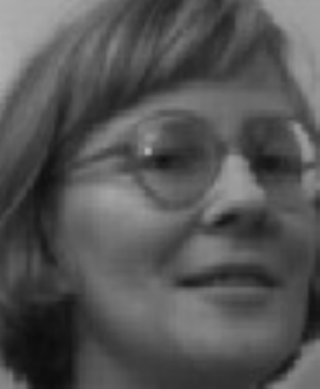}
	\includegraphics[width=1cm]{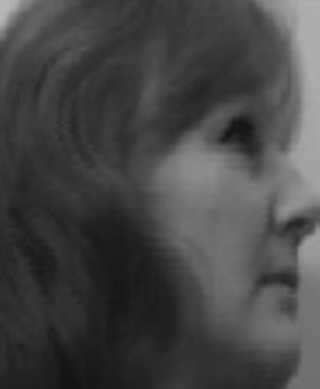}
	\includegraphics[width=1cm]{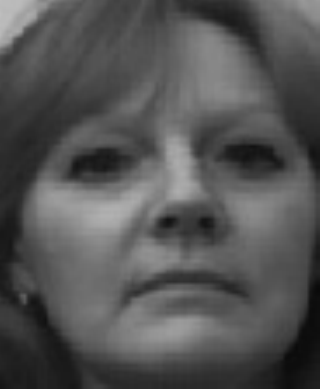}\\
	\includegraphics[width=1cm]{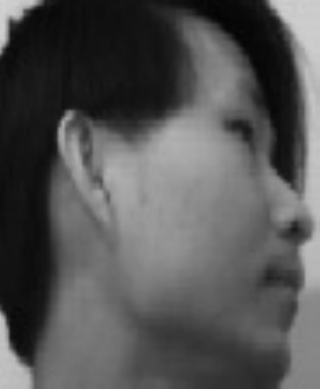}
	\includegraphics[width=1cm]{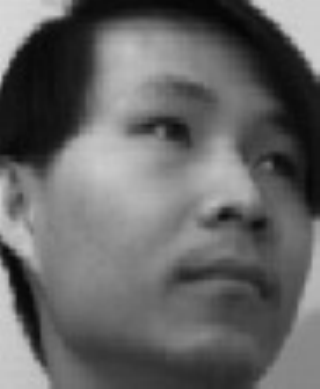}
	\includegraphics[width=1cm]{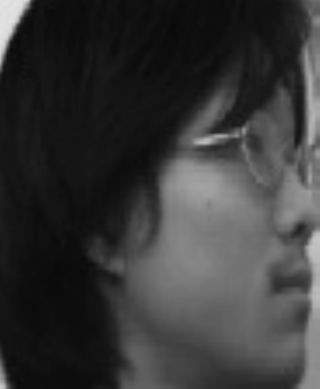}
	\includegraphics[width=1cm]{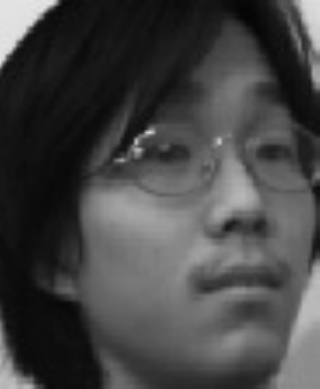}
	\includegraphics[width=1cm]{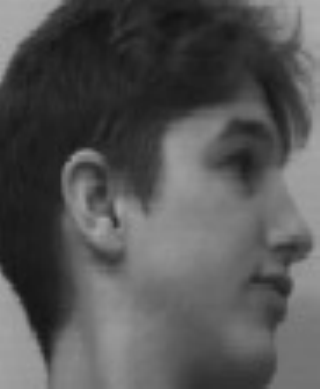}
	\includegraphics[width=1cm]{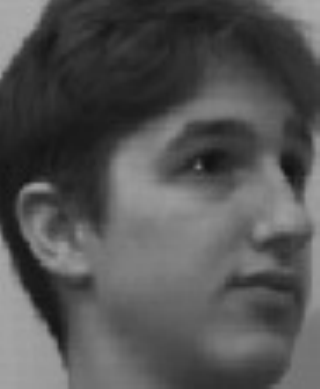}
	\includegraphics[width=1cm]{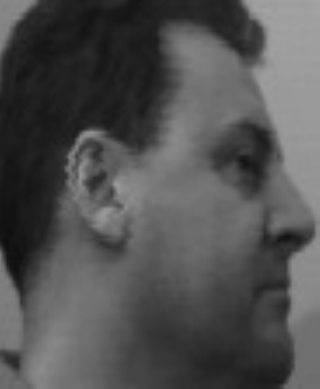}
	\includegraphics[width=1cm]{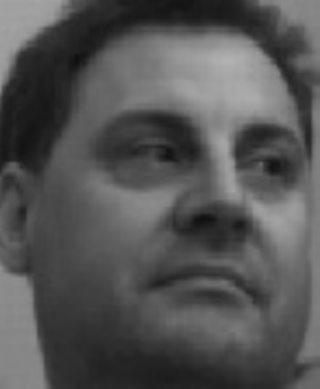}\\
	\includegraphics[width=1cm]{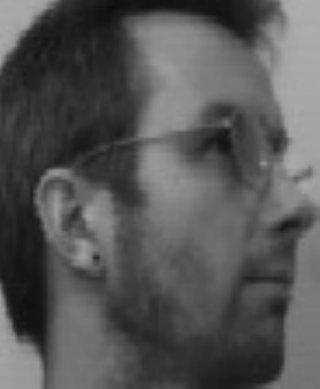}
	\includegraphics[width=1cm]{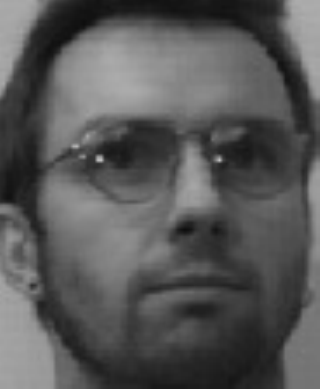}
	\includegraphics[width=1cm]{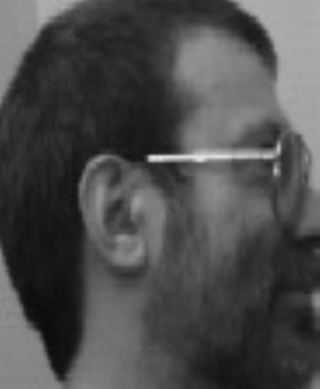}
	\includegraphics[width=1cm]{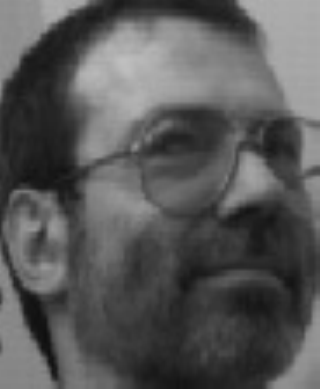}
	\includegraphics[width=1cm]{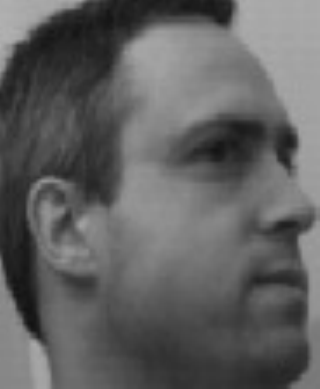}
	\includegraphics[width=1cm]{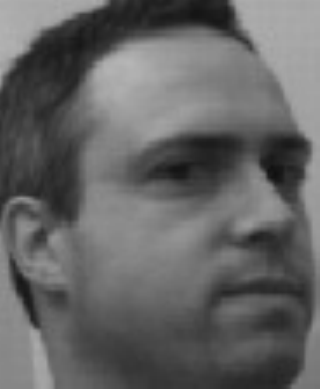}
	\includegraphics[width=1cm]{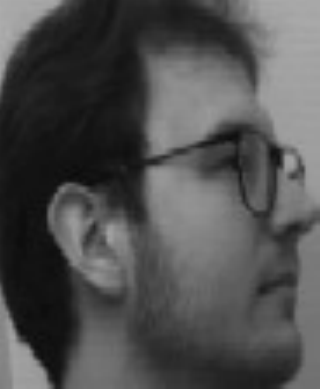}
	\includegraphics[width=1cm]{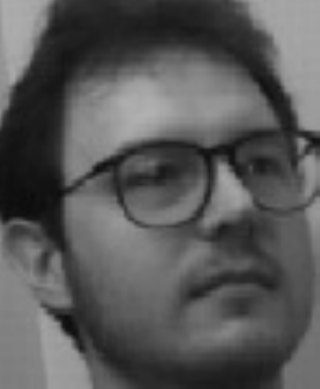}
	\caption{Some examples from the UMIST face dataset.}
	\label{Figure3}
\end{figure}

Based on the query man wearing glasses, we can label the dataset in the following way: man wearing glasses is regarded as completely relevant, which is labeled as 2 in our experiment; man not wearing glasses or woman wearing glasses is regarded as partially relevant, which is labeled as 1; woman not wearing glasses is regarded as irrelevant, which is labeled as 0.
Thus, there are 225, 239 and 111 images in the three categories, respectively.
The dimension of original data is 10304.

In this experiment, for LDMLR, we set iteration number $T_{\hbox{max}}=30$ and the tradeoff parameter $\mu=10^3$ according to \citet{Li2015Ordinal}. For our method, we set parameters $\mu=10^{-10}$, $\gamma=10^{-9}$, $\rho=0.5$, $\sigma=0.05$, the maximum number for line search loop is $r=20$. To get an EDM $D$ in $(\ref{D})$, we set parameter $\beta=1$ ($\mu_0=0$ in this situation). To apply our algorithm, we first use PCA to reduce dimension as done in \citet{Li2015Ordinal}. When using PCA, we center the data but don't scale the data. The final dimension is $150$, i.e., $d=150$.

{\bf Role of the Embedding Dimension $s$ and Distance Metric}
To see the role of the Embedding dimension $s$ and distance metric, we do the following test. We randomly select 10 images from each label for training and use the rest for testing. The images in {the} training set are grouped as follows. The training data $\mathbf{x}_1,\cdots,\mathbf{x}_{10}$ are of label $1$, $\mathbf{x}_{11},\cdots,\mathbf{x}_{20}$ are of label $2$, and $\mathbf{x}_{21},\cdots,\mathbf{x}_{30}$ are of label $3$. Then there are $n=30$ training data in total. We fix the number of target neighbors as $K=5$.

\begin{table}[!ht]
	\centering
	\caption{Results of cMDS-DML on the UMIST dataset, with different values of dimension $s$.}
	\label{table0}
	\begin{tabular}{c|ccccc}
		\hline
		 ${s}$                  & 2 & 3 & 5 & 8 & 10   \\		
		\hline
		MAE                 & 0.3539 & 0.3463 & 0.3498 & 0.3684 & 0.3798 \\
		STD                　& 0.0812 & 0.0671 & 0.0640 & 0.0762 & 0.0830 \\
		t(s)                & 2.27   &  3.10  & 4.58   & 6.60   & 10.08   \\
		\hline
	\end{tabular}
\end{table}

To choose a proper embedding dimension, we tried several values for $s$, i.e., $s = 2, 3, 5, 8, 10$. The preliminary results are reported in Table $\ref{table0}$. Since $n =30$ is not so big, we directly apply cMDS to $D$ in S2 of Algorithm \ref{alg}. The observation is that $s=3$ and $s=5$ are the best in terms of MSE. Taking visualization into account, we choose $s=3$ in our following test.

Then we compute the Euclidean distance between {the} training data $\mathbf{x}_i,\ \mathbf{x}_j,\ i,j=1,\cdots,30$. Fig. $\ref{Figure2}$ shows $\|\mathbf{x}_i-\mathbf{x}_1\|$, the Euclidean distance between $\mathbf{x}_i$ and the first data $\mathbf{x}_1$, $i = 1, \cdots, n$.
It is observed that the distance between $\mathbf{x}_1$ and $\mathbf{x}_{12}$ is less than the distance between $\mathbf{x}_1$ and $\mathbf{x}_8$. Moreover, the distance between $\mathbf{x}_1$ and $\mathbf{x}_{16}$ is bigger than the distance between $\mathbf{x}_1$ and $\mathbf{x}_{22}$. It implies that the Euclidean distances between the original images  can not be used for ranking.
With embedding dimension $s=3$, we apply our method to learn $L$.  After learning $L$, the embedding points of the training data in the three dimensional space can be found, i.e., $L\mathbf{x}_i$, $i=1,\cdots,30$. Fig. $\ref{Figure3D}$ plots the embedding points. As we can see, points highly cluster together with the same label. However, the distances between points with different labels can not be clearly seen from Fig. $\ref{Figure3D}$. We use the learned $L$ to measure the distances between the training data. Fig. $\ref{Figure1}$ illustrates the distances between $\mathbf{x}_i$ and $\mathbf{x}_1$ under $L$, i.e., $\|L\mathbf{x}_i-L\mathbf{x}_1\|$, $i = 1,\cdots, n$.
Comparing Fig. $\ref{Figure1}$ with Fig. $\ref{Figure2}$, we can see that the data is much better layered with the $L$ distance than with the Euclidean distance. Hence the proposed model does preserve the ordinal relationship.
\begin{figure}[!ht]\centering
    \includegraphics[width=1\textwidth]{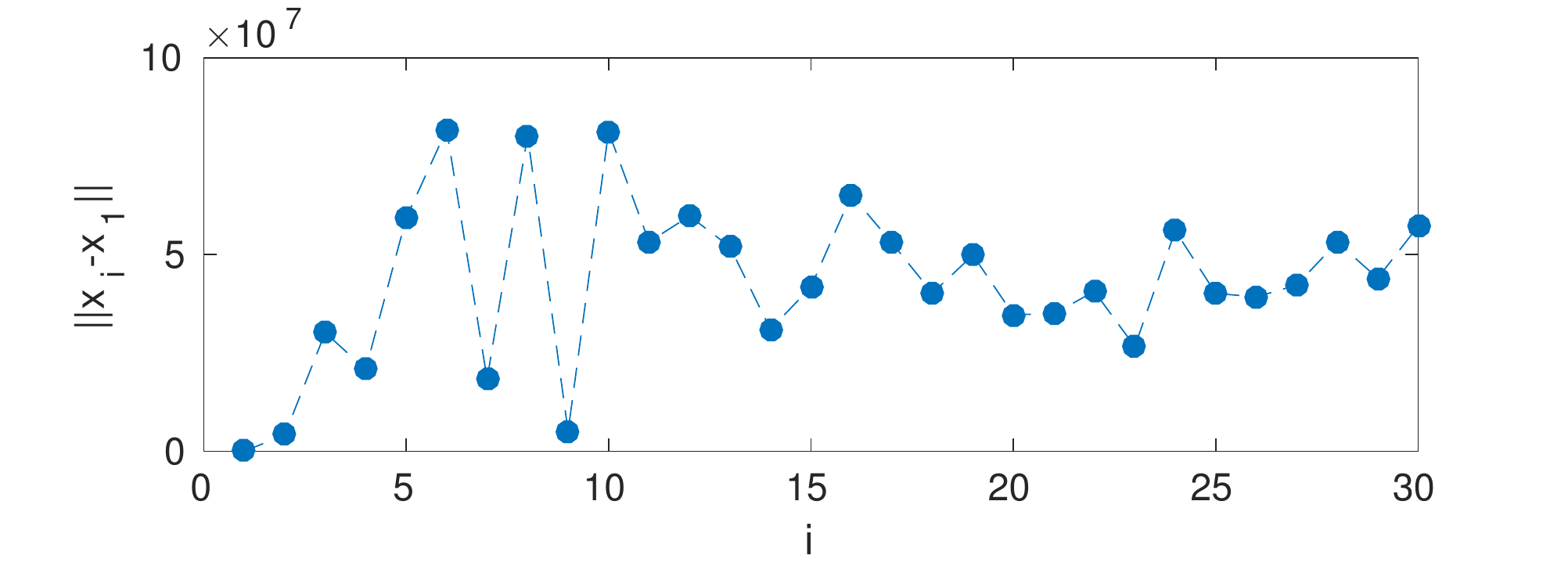}
	\caption{The Euclidean distance between $\mathbf{x}_i$ and $\mathbf{x}_1$, i.e., $\|x_i-x_1\|$.}
	\label{Figure2}
\end{figure}

\begin{figure}[!ht]\centering
	\includegraphics[width=1\textwidth]{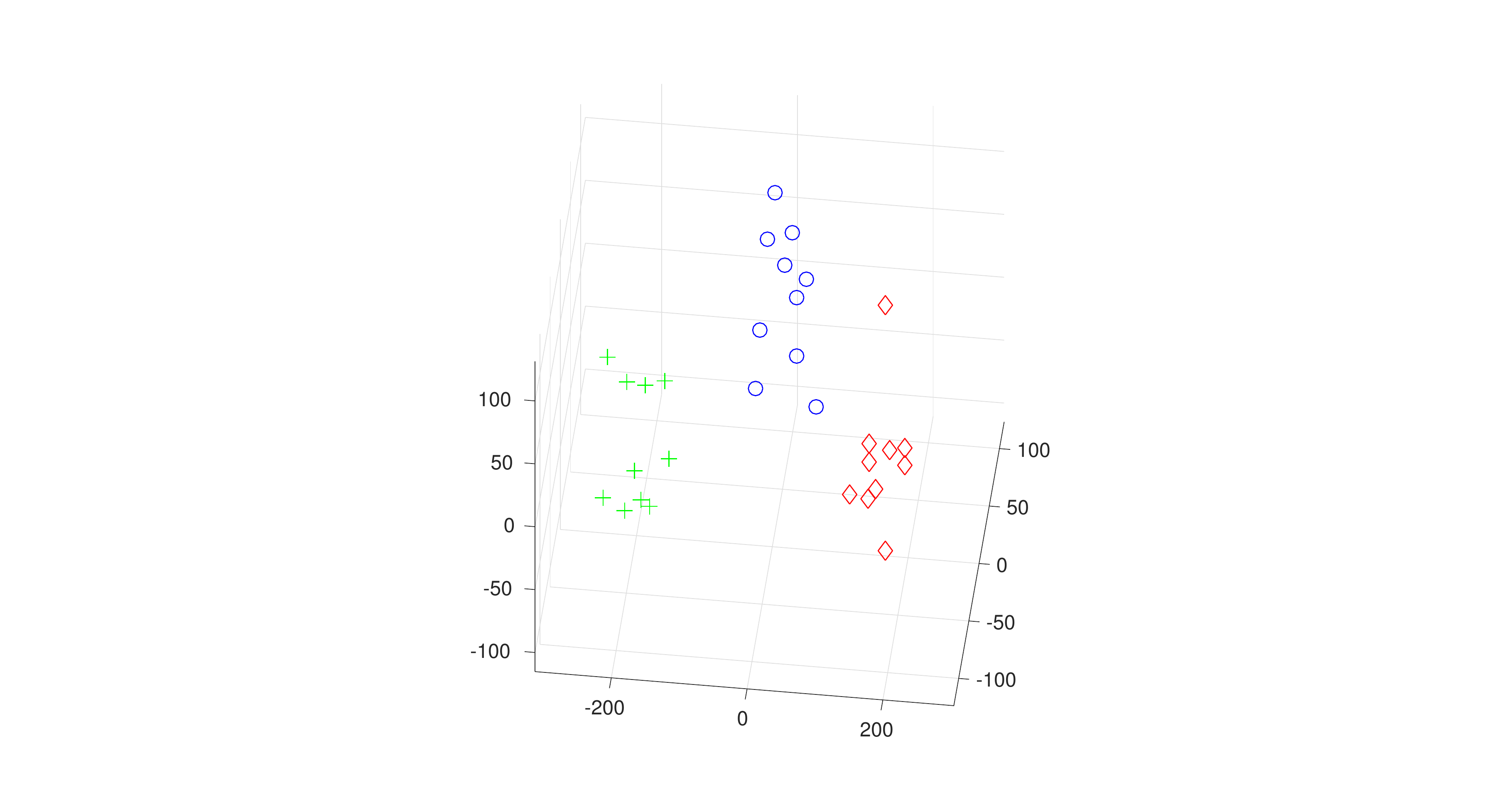}
	\caption{The embedding data points of {the} training data points in the three dimensional space, i.e., $L\mathbf{x}_i$.}
	\label{Figure3D}
\end{figure}

\begin{figure}[!ht]\centering
    \includegraphics[width=1\textwidth]{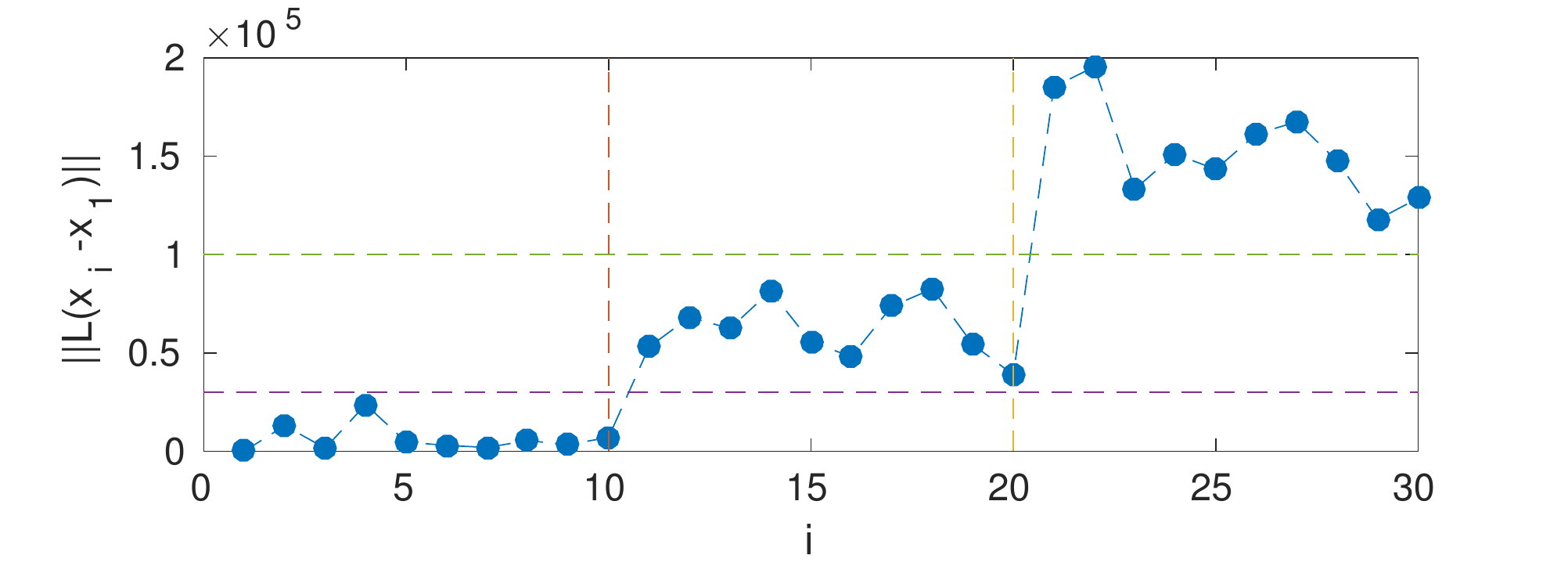}
	\caption{The distance between $\mathbf{x}_i$ and $\mathbf{x}_1$ under $L$, i.e., $\|L(\mathbf{x}_i-\mathbf{x}_1)\|$.}
	\label{Figure1}
\end{figure}
{\bf Comparison with LDMLR}
Now we compare with LDMLR in \citet{Li2015Ordinal}.
First, we randomly select 10 images from each distinct label as {the} training data and use the rest for testing. Different values of $K$ are chosen to investigate the performance.
Table $\ref{table3}$ gives the results including MAE, STD (standard deviation), and {CPU time} in seconds. We can see
in all cases, cMDS-DML uses much less time than LDMLR, which is not surprising since our method has lower computational complexity. In terms of MAE, cMDS-DML also outperforms LDMLR.
\begin{table}[!ht]
	\centering
	\caption{Results of cMDS-DML and LDMLR on the UMIST dataset, with different values of target neighbors $K$ and fixed $n=30$.}
	\label{table3}
	\begin{tabular}{cccc}
		\hline
		${K}$ &        & cMDS-DML & LDMLR  \\
		\hline
		\multirow{3}{*}{4} & MAE    & 0.3488     & 0.4291 \\
		& STD    &$\pm0.0684$ & $\pm0.0760 $ \\
		& t(s)&  2.88     &  10.26 \\
		\hline
		\multirow{3}{*}{5} & MAE    & 0.3463      & 0.4676 \\
		&STD     & $\pm0.0671$ & $\pm0.0735$   \\
		& t(s)&  3.10      &10.39   \\
		\hline
		\multirow{3}{*}{6}  &MAE     & 0.3521     & 0.4782 \\
		& STD    &$\pm0.0689$ & $\pm0.0724$  \\
		& t(s)&   3.32   &  11.92\\
		\hline
	\end{tabular}
\end{table}

Next, to evaluate the influence of dimension on the performance of our method, we increase dimension $d$ while fixing the size of {the} training set $n=30$ and the number of target neighbors $K=5$. Table $\ref{table4}$ lists the ranking results. It can be seen that as $d$ increases, the resulting MAE of both algorithms is not sensitive to $d$.
As for computing time, as $d$ increases, LDMLR obviously costs more time while CUP time for our method is fairly stable.
It's reasonable since the computational complexity of our method is {proportional} to $d^2$ while that of LDMLR is $d^3$.

\begin{table}[!ht]
	\centering
	\caption{Results of cMDS-DML and LDMLR on the UMIST dataset, with different values of dimension $d$.}
	\label{table4}
	\begin{tabular}{cccc}
		\hline
		${d}$ &          & cMDS-DML & LDMLR  \\
		\hline
		\multirow{3}{*}{150} & MAE    & 0.3463      & 0.4676 \\
		&STD     & $\pm0.0671$ & $\pm0.0735$   \\
		& t(s)&  3.10      &10.39   \\
		\hline
		\multirow{3}{*}{200} & MAE    & 0.3518      & 0.4695 \\
		&STD     & $\pm0.0668$ & $\pm0.0730$   \\
		& t(s)&  3.18     &17.21   \\
		\hline
		\multirow{3}{*}{250}  &MAE     & 0.3545     & 0.4708 \\
		& STD    &$\pm0.0674$ & $\pm0.0742$  \\
		& t(s)&   4.15    &  23.90\\
		\hline
	\end{tabular}
\end{table}

Finally, we increase the size of {the} training set $n$ with fixed  dimension $d=150$ and the number of target neighbors $K=5$. We randomly select $n/3$ images from each distinct label for training and report results in Table $\ref{table5}$.
As $n$ increases, the performance of both methods becomes better, which is reasonable. cMDS-DML achieves higher ranking performance than LDMLR. In particular, cMDS-DML achieves $25.94\%$, $36.90\%$, $39.36\%$ improvement in MAE ($|$MAE(LDMLR)-MAE(cMDS-DML)$|$/MAE(LDMLR)) over LDMLR, respectively. Moreover, cMDS-DML is also  faster than LDMLR.
\begin{table}[!ht]
	\centering
	\caption{Results of cMDS-DML and LDMLR on the UMIST dataset, with different sizes of the training set ${n}$.}
	\label{table5}
	\begin{tabular}{cccc}
		\hline
		${n}$ &          & cMDS-DML & LDMLR  \\
		\hline
		\multirow{3}{*}{30} & MAE    & 0.3463      & 0.4676 \\
		&STD     & $\pm0.0671$ & $\pm0.0735$   \\
		& t(s)&  3.10      &10.39   \\
		\hline
		\multirow{3}{*}{60} & MAE    & 0.1958      & 0.3103 \\
		&STD     & $\pm0.0478$ & $\pm0.0465$   \\
		& t(s)&  4.94      &39.13   \\
		\hline
		\multirow{3}{*}{90}  &MAE     & 0.1373     & 0.2264 \\
		& STD    &$\pm0.0329$ & $\pm0.0319$  \\
		& t(s)&   5.91    &  79.92\\
		\hline
	\end{tabular}
\end{table}

\subsection{Experiments on the FG-NET dataset}
In this experiment, we test our algorithm on
the FG-NET dataset which {is} labeled by age.
The FG-NET dataset contains 1002 face images.
There are 82 subjects in total with the age ranges from 1 to 69. Fig. $\ref{Figure4}$ shows some examples from the FG-NET dataset.
To get better performance of LDMLR, we set the iteration number $T_{\hbox{max}}=50$ and the tradeoff parameter $\mu=10^3$. For our method, $\mu=10^{-10}$, $\gamma=10$, $\rho=0.5$, $\sigma=0.05$, the embedding dimension $s=3$ and the maximum number for line search loop is $r=20$.
\begin{figure}[!ht]\centering
	\includegraphics[width=1cm,height=1cm]{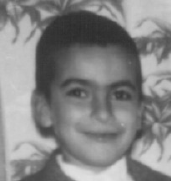}
	\includegraphics[width=1cm,height=1cm]{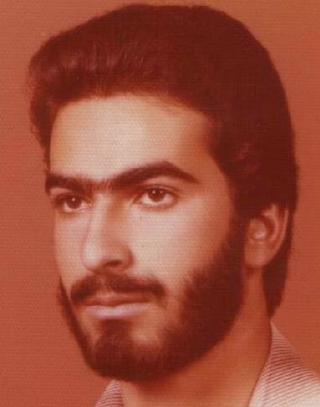}
	\includegraphics[width=1cm,height=1cm]{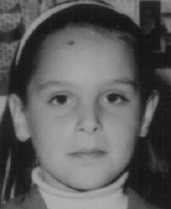}
	\includegraphics[width=1cm,height=1cm]{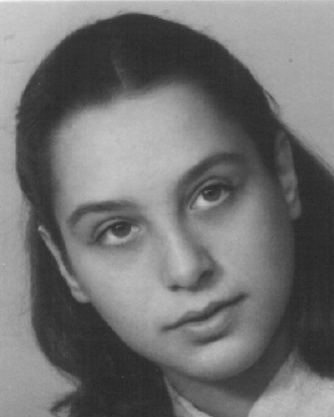}
	\includegraphics[width=1cm,height=1cm]{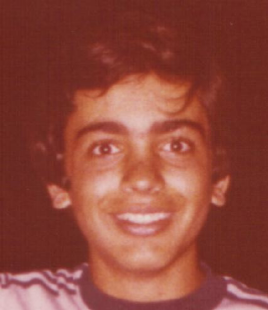}\\
	\includegraphics[width=1cm,height=1cm]{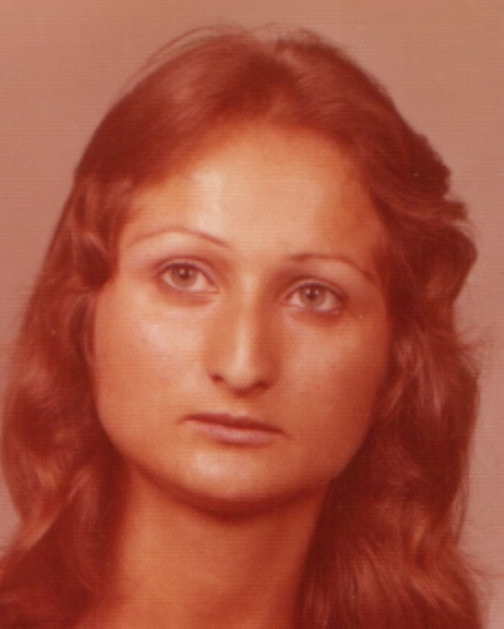}
	\includegraphics[width=1cm,height=1cm]{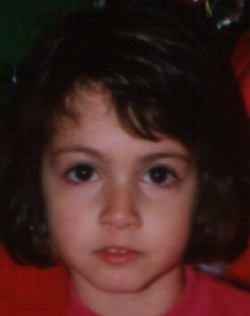}
	\includegraphics[width=1cm,height=1cm]{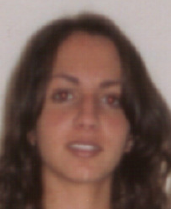}
	\includegraphics[width=1cm,height=1cm]{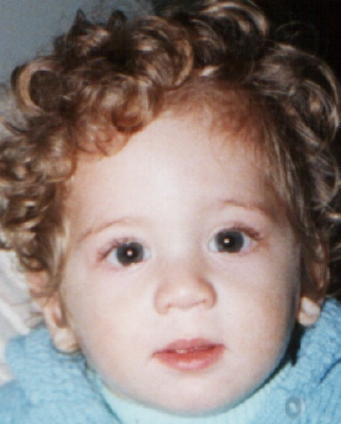}
	\includegraphics[width=1cm,height=1cm]{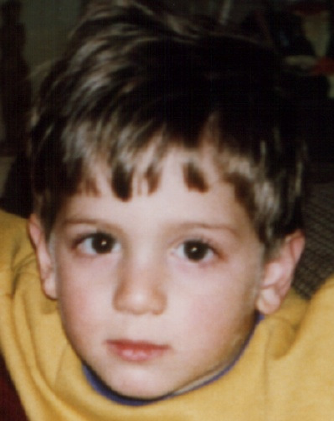}
	\caption{Some examples from the FG-NET dataset.}
	\label{Figure4}
\end{figure}

\begin{figure}[!ht]\centering
	\includegraphics[width=1\textwidth]{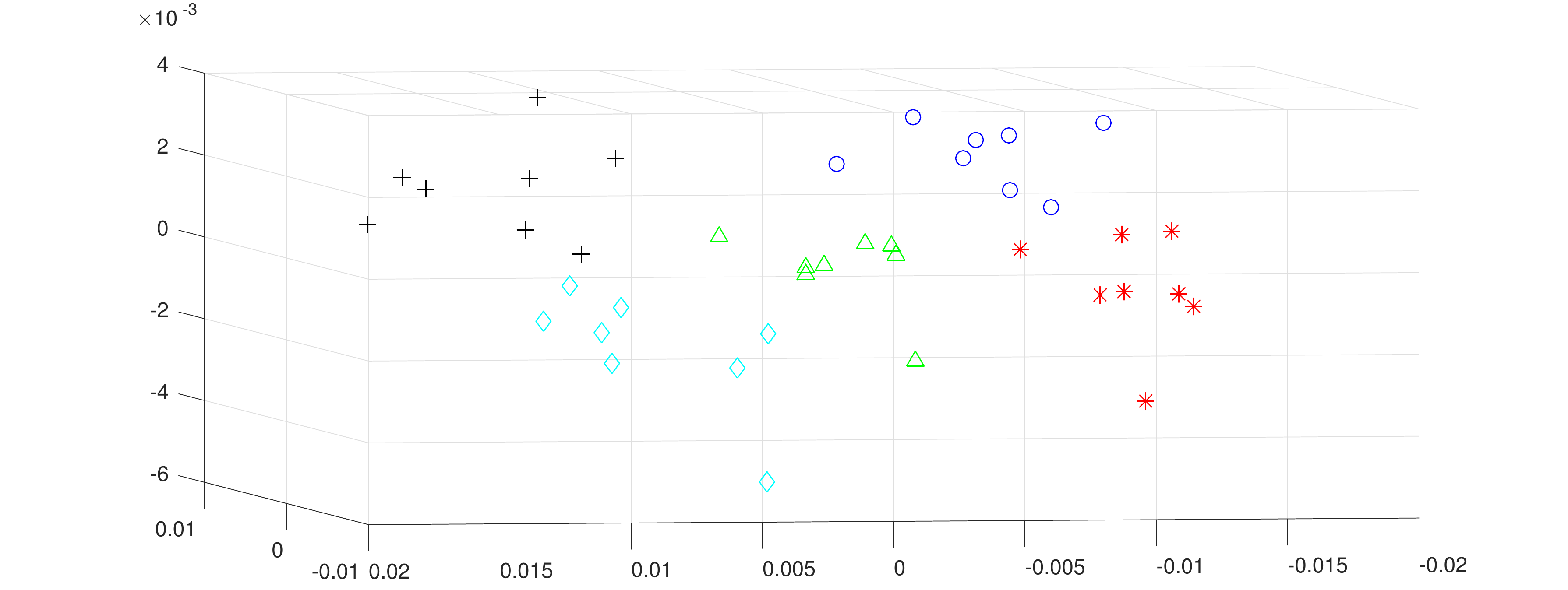}
	\caption{The embedding data points of {the} training data points in the three {dimensional} space, i.e., $L\mathbf{x}_i$.}
	\label{Figure3D2}
\end{figure}
We pick up subjects with age 1, 5, 9, 15, 19 and relabel them as 1, 2, 3, 4, 5.
There are 27, 40, 25, 30, 23 images in the five categories, respectively. The original dimension of images is 136.
As in subsection $5.1$, we preprocess the data by PCA to reduce dimension to $80$.
We randomly select 8 images from each distinct label for training and set $K=5$. Fig. $\ref{Figure3D2}$ plots the embedding data points of {the} training data points in the three {dimensional} space, i.e., $L\mathbf{x}_i$, $i=1,\cdots,40$. As we can see, points almost cluster together with the same label.

Next, we randomly select 10 images from each distinct label for training and use the rest for testing. That is, the size of {the} training set is $n=50$. We also set $\beta=1$ in $(\ref{D})$. We set different values for target neighbors to investigate the performance. Table $\ref{table6}$ lists the experimental results.
In the three cases, cMDS-DML achieves $48.55\%$, $47.27\%$, $46.57\%$ improvement over LDMLR, respectively.

\begin{table}[!ht]
	\centering
	\caption{Results of cMDS-DML and LDMLR on the FG-NET dataset, with different values of target neighbors ${K}$.}
	\label{table6}
	\begin{tabular}{cccc}
		\hline
		${K}$ &          & cMDS-DML & LDMLR  \\
		\hline
		\multirow{3}{*}{4} & MAE    & 0.7295      & 1.4179 \\
		&STD     & $\pm0.0694$ & $\pm0.1228$   \\
		& t(s)&  2.70      &21.59   \\
		\hline
		\multirow{3}{*}{5} & MAE    & 0.7109      & 1.3482 \\
		&STD     & $\pm0.0657$ & $\pm0.0942$   \\
		& t(s)&  3.11      &18.37   \\
		\hline
		\multirow{3}{*}{6}  &MAE     & 0.7095     & 1.3278 \\
		& STD    &$\pm0.0682$ & $\pm0.1141$  \\
		& t(s)&   3.86    &  25.43\\
		\hline
	\end{tabular}
\end{table}

\begin{table}[!ht]
	\centering
	\caption{Results of cMDS-DML and LDMLR on the FG-NET dataset, with different sizes of the training set ${n}$.}
	\label{table7}
	\begin{tabular}{cccc}
		\hline
		${n}$ &          & cMDS-DML & LDMLR  \\
		\hline
		\multirow{3}{*}{40} & MAE    & 0.7613      & 1.3634 \\
		&STD     & $\pm0.0845$ & $\pm0.1144$   \\
		& t(s)&  2.34      &11.13   \\
		\hline
		\multirow{3}{*}{50} & MAE    & 0.7377      & 1.3482 \\
		&STD     & $\pm0.0657$ & $\pm0.0942$   \\
		& t(s)&  4.56      &18.37   \\
		\hline
		\multirow{3}{*}{75}  &MAE     & 0.7243     & 1.2551 \\
		& STD    &$\pm0.0809$ & $\pm0.1023$  \\
		& t(s)&   29.97    &  40.04\\
		\hline
	\end{tabular}
\end{table}
Finally, we fix the value of target neighbors $K=5$. We randomly select $n/5$ images from each distinct label for training. The  size of {the} training set is chosen as $n=40, \ 50, \ 75$. See Table $\ref{table7}$ for the results, which again verify the efficiency of the proposed model.

Overall speaking,  our numerical results show that cMDS-DML outperforms LDMLR significantly both in ranking performance and CPU time.
\section{Conclusions}\label{sec-conclusion}
In this paper, we proposed a so-called cMDS-DML approach for image ranking, which unifies the idea of classical multidimensional scaling and distance metric learning. The algorithm enjoys low computational complexity, compared with LDMLR in \citet{Li2015Ordinal}. Numerical results verified the efficiency of the new approach and the improvement over LDMLR.

%\bibliographystyle{apalike}
%\bibliography{documentnew3}
%\include{documentnew33}

\bibliographystyle{plain}
\bibliography{paper}

\begin{thebibliography}{}
	
	\bibitem[Anjos and Lasserre, 2012]{Anjos}
	Anjos, M~F and J~B Lasserre (2012).
	\newblock {\em Handbook on Semidefinite, Conic and Polynomial Optimization}.
	\newblock Springer US.
	
	\bibitem[Bar-hillel \emph{et~al}., 2003]{BarHillel2003Learning}
	Bar-hillel, A, T Hertz, N Shental and D Weinshall (2003).
	\newblock Learning distance functions using equivalence relations.
	\newblock In {\em Proceedings of the Twentieth International Conference on
		Machine Learning}.
	
	\bibitem[Borg and Groenen, 2005]{Borg}
	Borg, I and P~J~F Groenen (2005).
	\newblock {\em Modern Multidimensional Scaling}.
	\newblock Springer.
	
	\bibitem[Dai \emph{et~al}., 2016]{Dai2016Design}
	Dai, M, Z Lu, D Shen, H Wang, B Chen, X Lin, S Zhang, L Zhang
	and H Liu (2016).
	\newblock Design of (4, 8) binary code with {MDS} and zigzag-decodable
	property.
	\newblock {\em Wireless Personal Communications}, 89(1):1--13.
	
	\bibitem[Dattorro, 2008]{Dattorro}
	Dattorro, J (2008).
	\newblock {\em Convex Optimization and Euclidean Distance Geometry}.
	\newblock Meboo Publishing.
	
	\bibitem[Ding and Qi, 2016]{QiDing2016}
	Ding, C and H~D Qi (2016).
	\newblock Convex optimization learning of faithful Euclidean distance
	representations in nonlinear dimensionality reduction.
	\newblock {\em Mathematical Programming}, 164(1):341-381.
	
	\bibitem[Ding and Qi, 2017]{Ding}
	Ding, C and H~D Qi (2017).
	\newblock Convex Euclidean distance embedding for collaborative position
	localization with {NLOS} mitigation.
	\newblock {\em Computational Optimization and Applications}, 66(1):187--218.
	
	\bibitem[Dokmanic \emph{et~al}., 2015]{Dokmanic}
	Dokmanic, I, R Parhizkar, J Ranieri and M Vetterli (2015).
	\newblock Euclidean distance matrices: Essential theory, algorithms, and
	applications.
	\newblock {\em {IEEE} Signal Processing Magazine}, 32(6):12--30.
	
	\bibitem[Cailliez, 1983]{Cailliez1983}
	Cailliez, F (1983).
	\newblock The analytical solution of the additive constant problem.
	\newblock {\em Psychometrika}, 48(2):305--308.
	
	\bibitem[Goldberger \emph{et~al}., 2004]{NIPS2004_2566}
	Goldberger, J, S~T Roweis, G Hinton and R Salakhutdinov (2004).
	\newblock Neighbourhood components analysis.
	\newblock In {\em Proceeding of the 17th International Conference on Neural Information Processing Systems}.
	
	\bibitem[Gower, 1985]{GOWER198581}
	Gower, J~C (1985).
	\newblock Properties of Euclidean and non-Euclidean distance matrices.
	\newblock {\em Linear Algebra and its Applications}, 67:81--97.
	
	\bibitem[Graham and Allinson, 1998]{Graham1998}
	Graham, D~B and N~M Allinson (1998).
	\newblock Characterising virtual eigensignatures for general purpose face
	recognition.
	\newblock {\em Face recognition: From theory to applications. NATO ASI Series
		F, Computer and Systems Sciences}, 163:446--456.
	
	\bibitem[Gutierrez \emph{et~al}., 2016]{10.1109/TKDE.2015.2457911}
	Gutierrez, P~A, M Perez-Ortiz, J Sanchez-Monedero, F Fernandez-Navarro
	and C Hervas-Martinez (2016).
	\newblock Ordinal regression methods: survey and experimental study.
	\newblock {\em {IEEE} Transactions on Knowledge and Data Engineering},
	28(1):127--146.
	
	\bibitem[Lanitis, 2008]{Lanitis2008}
	Lanitis, A (2008).
	\newblock Comparative evaluation of automatic age progression methodologies.
	\newblock {\em {EURASIP} Journal on Advances in Signal Processing},
	2008:1--10.
	
	\bibitem[Li \emph{et~al}., 2015]{Li2015Ordinal}
	Li, C, Q Liu, J Liu and H Lu (2015).
	\newblock Ordinal distance metric learning for image ranking.
	\newblock {\em {IEEE} Transactions on Neural Networks and Learning Systems},
	26(7):1551--1559.
	
	\bibitem[Li and Qi, 2017]{LiQi2017}
	Li, Q and H~D Qi (2017).
	\newblock An inexact smoothing newton method for Euclidean distance matrix
	optimization under ordinal constraints.
	\newblock {\em Journal of Computational Mathematics}, 35(4):467--483.
	
	\bibitem[Liberti \emph{et~al}., 2014]{Liberti}
	Liberti, L, C Lavor, N Maculan and A Mucherino (2014).
	\newblock Euclidean distance geometry and applications.
	\newblock {\em SIAM Review}, 56(1):3--69.
	
	\bibitem[Nocedal and Wright, 2006]{Nocedal}
	Nocedal, J and S~J Wright (2006).
	\newblock {\em Numerical Optimization}.
	\newblock Springer New York.
	
	\bibitem[Qi, 2013]{Qi2013A}
	Qi, H~D (2013).
	\newblock A semismooth newton method for the nearest Euclidean distance matrix
	problem.
	\newblock {\em {SIAM} Journal on Matrix Analysis and Applications},
	34(1):67--93.
	
	\bibitem[Qi \emph{et~al}., 2013]{QiXiuYuan2013}
	Qi, H~D, N Xiu and X Yuan (2013).
	\newblock A lagrangian dual approach to the single-source localization problem.
	\newblock {\em {IEEE} Transactions on Signal Processing}, 61(15):3815--3826.
	
	\bibitem[Qi and Yuan, 2014]{Qi2014Computing}
	Qi, H~D and X Yuan (2014).
	\newblock Computing the nearest Euclidean distance matrix with low embedding
	dimensions.
	\newblock {\em Mathematical Programming}, 147:351--389.
	
	\bibitem[Qiao, 2015]{Qiao2015Noncrossing}
	Qiao, X (2015).
	\newblock Noncrossing ordinal classification.
	\newblock {\em Statistics}.
	
	\bibitem[Schoenberg, 1935]{Schoenberg1935Remarks}
	Schoenberg, I~J (1935).
	\newblock Remarks to maurice frechet's article sur la definition axiomatique
	d'une classe d'espace distances vectoriellement applicable sur l'espace de
	hilbert.
	\newblock {\em Annals of Mathematics}, 36(3):724--732.
	
	\bibitem[Shalev-Shwartz \emph{et~al}., 2004]{Shalev2004Online}
	Shalev-Shwartz, S, Y Singer and A~Y Ng (2004).
	\newblock Online and batch learning of pseudo-metrics.
	\newblock In {\em Proceedings of the Twenty-first international conference on
		Machine learning}.
	
	\bibitem[Shen \emph{et~al}., 2010]{Shen2010Scalable}
	Shen, C, J Kim and L Wang (2010).
	\newblock Scalable large-margin mahalanobis distance metric learning.
	\newblock {\em {IEEE} Transactions on Neural Networks}, 21(9):1524--1530.
	
	\bibitem[Sugiyama, 2007]{Sugiyama2007}
	Sugiyama, M (2007).
	\newblock Dimensionality reduction of multimodal labeled data by local fisher
	discriminant analysis.
	\newblock {\em Journal of Machine Learning Research}, 8(1):1027--1061.
	

	\bibitem[Toh, 2007]{qsdp}
	Toh, K~C (2007).
	\newblock An inexact primal-dual path-following algorithm for convex quadratic SDP. \newblock {\em Mathematical Programming}, 112:221--254.

	\bibitem[Torgerson, 1952]{Torgerson1952}
	Torgerson, W~S (1952).
	\newblock Multidimensional scaling: I. theory and method.
	\newblock {\em Psychometrika}, 17(4):401--419.
	
	\bibitem[Wang \emph{et~al}., 2017]{PMID:28371790}
	Wang, H, Y Shi, L Niu and Y Tian (2017).
	\newblock Nonparallel support vector ordinal regression.
	\newblock {\em {IEEE} Transactions on Cybernetics}, 47(10):3306--3317.
	
	\bibitem[Weinberger and Saul, 2009]{Weinberger2009Distance}
	Weinberger, K~Q and L~K Saul (2009).
	\newblock Distance metric learning for large margin nearest neighbor
	classification.
	\newblock {\em Journal of Machine Learning Research}, 10(1):207--244.
	
	\bibitem[Xiang \emph{et~al}., 2008]{Xiang2008}
	Xiang, S, F Nie and C Zhang (2008).
	\newblock Learning a mahalanobis distance metric for data clustering and
	classification.
	\newblock {\em Pattern Recognition}, 41(12):3600--3612.
	
	\bibitem[Xing \emph{et~al}., 2003]{Xing2003Distance}
	Xing, E~P, A~Y Ng, M~I Jordan and S Russell (2003).
	\newblock Distance metric learning, with application to clustering with
	side-information.
	\newblock In {\em Proceedings of the Conference on Neural Information Processing Systems.}
	
	\bibitem[Yang \emph{et~al}., 2007]{Yang2012Bayesian}
	Yang, L, R Jin and R Sukthankar (2007).
	\newblock Bayesian active distance metric learning.
	\newblock In {\em Proceedings of the Twenty-Third Conference on Uncertainty in Artificial Intelligence}.
	
	\bibitem[Young and Householder, 1938]{Young1938Discussion}
	Young, G and A~S Householder (1938).
	\newblock Discussion of a set of points in terms of their mutual distances.
	\newblock {\em Psychometrika}, 3(1):19--22.
	
\end{thebibliography}
\end{document}